\documentclass{article}
\usepackage{iclr2026_conference,times}
\usepackage[T1]{fontenc}
\usepackage[utf8]{inputenc}
\usepackage{microtype}
\usepackage{inconsolata}
\usepackage{graphicx}
\usepackage{amsmath,amssymb,amsthm,mathtools}
\usepackage{booktabs}
\usepackage{multirow}
\usepackage{placeins}
\usepackage{hyperref}
\usepackage{url}

\usepackage{amsmath,amsfonts,bm}

\def\eqref#1{equation~\ref{#1}}

\def\1{\bm{1}}

\DeclareMathAlphabet{\mathsfit}{\encodingdefault}{\sfdefault}{m}{sl}
\SetMathAlphabet{\mathsfit}{bold}{\encodingdefault}{\sfdefault}{bx}{n}

\newtheorem{proposition}{Proposition}

\newtheorem{theorem}{Theorem}

\title{PRISM: Preference-aware Influence function based Data Selection Method for Efficient Fine-Tuning}

\author{
  Qihao Lin \quad
  Guanxu Chen \quad
  Dongrui Liu\thanks{Corresponding authors.} \quad
  Jing Shao\footnotemark[1] \\
  Shanghai Artificial Intelligence Laboratory \\
  \texttt{linqihao@pjlab.org.cn} \quad
  \texttt{shaojing@pjlab.org.cn}
}

\iclrfinalcopy

\begin{document}
\maketitle
\begin{abstract}
As LLMs continue to scale up, improving training efficiency heavily relies on effective data utilization. Data selection mitigates this issue by allocating the limited training budget to high-value examples that optimally facilitate the model’s target behavior. Most existing approaches define target behavior via a set of target examples and score candidate training data based on their estimated influence on these samples. However, such methods uniformly treat all target examples as equally important, ignoring the varying relevance of individual examples to model optimization. Specifically, target examples that align closely with the model’s inherent behavior deliver stronger supervisory signals, whereas discrepant examples yield only weak and ineffective local guidance. We propose PRISM, a \textbf{Pr}eference-aware \textbf{I}nfluence function based Data \textbf{S}election \textbf{M}ethod. It leverages model preference to assign weights to target examples and builds a preference-aware target direction. PRISM evaluates candidate training samples according to their influence on this direction, and prioritizes data budget allocation to samples that effectively drive the model to match expected target behavior. Theoretical analysis verifies that weighted preference construction generates a superior first-order gradient direction for boosting target preference, compared with uniform aggregation strategies. Extensive experiments covering diverse model architectures and parameter scales demonstrate that PRISM achieves better performance in efficient fine-tuning and safety-aligned supervised fine-tuning rectification. The results validate that accurate characterization of target behavior serves as the core of cost-effective data selection.
\end{abstract}

\section{Introduction}

As LLMs continue to scale, the costs of training and adaptation are steadily increasing, making training efficiency a critical challenge \citep{brown2020language,bommasani2021opportunities}. Beyond improving training algorithms \citep{hu2022lora,NEURIPS2023_1feb8787}, an important complementary direction under limited budgets is enhancing data efficiency by selectively training on the most valuable examples. Data selection provides a principled approach to this problem by identifying the training examples that contribute most to the target behavior \citep{xia2024less}.

\begin{figure}[t]
    \centering
    \includegraphics[width=\linewidth]{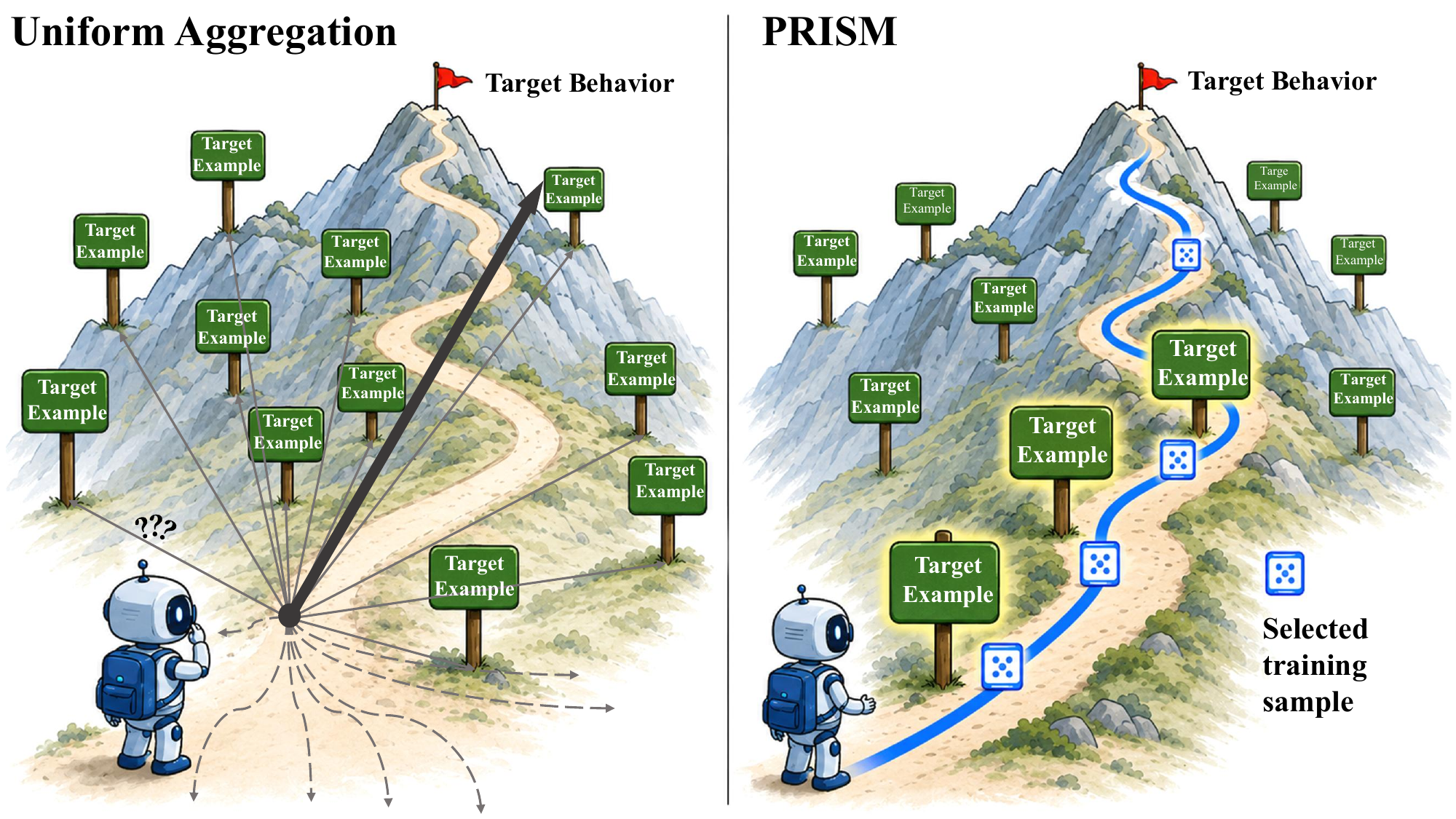}
    \caption{Motivation of PRISM. Equal aggregation treats all target examples equally, whereas PRISM emphasizes target examples closer to the model and focuses the budget on more actionable training examples.}
    \label{fig:motivation}
\vspace{-0.3cm}
\end{figure}

Data selection generally comprises two complementary steps: defining the target behavior that the model is expected to learn, typically represented by a set of target examples, and estimating the influence of each training example on that behavior. Although prior work has developed various methods to estimate influence more accurately or efficiently \citep{koh2017understanding,yeh2018representer,NEURIPS2020_e6385d39,kwon2024datainf}, these studies pay limited attention to precisely characterizing the target behavior itself. In particular, they often aggregate all target examples equally, ignoring their varying relevance to the model \citep{xia2024less,kowal2026concept,chen2025personaVectors,NEURIPS2025_715dbde2}. As illustrated in Figure~\ref{fig:motivation}, such equal aggregation blends diverse signals and may yield a target direction that fails to emphasize behaviors most likely to be amplified near the model.

To address this limitation, we account for differences in the importance of target examples when constructing the target behavior and propose PRISM (\textbf{Pr}eference-aware \textbf{I}nfluence function based Data \textbf{S}election \textbf{M}ethod). PRISM quantifies the model's preference for each target example using the likelihood assigned to the response and uses this preference to weight the example when constructing the target direction. A higher preference indicates that the example corresponds to a behavioral direction closer to the model's behavior and can therefore be strengthened more readily through training. Training examples associated with high-preference target examples are thus more likely to move the model efficiently toward the target behavior under a limited data budget. In contrast, lower-preference examples provide weaker selection signals. PRISM then scores candidate training examples by estimating their influence on this preference-aware target direction, thereby allocating the data budget to examples that are more likely to strengthen the target behavior efficiently.

Theoretically, we show that PRISM captures the first-order direction that most effectively increases target-behavior preference in the neighborhood of the model. Under a fixed selection budget, selecting examples along this direction maximizes the desired gain in target preference per unit of data. By contrast, equal aggregation mixes high- and low-preference target examples, causing the estimated direction to deviate from the behavioral pattern toward which the model can most readily be pushed, thereby reducing the utility of each selected example.

We conduct systematic experiments across model families and scales in two complementary data selection settings. First, to improve general capabilities, we define the target behavior in terms of factual knowledge, challenging reasoning, multilingual question answering, mathematical problem solving, and safety guardrail capabilities. Results show that, under the same data budget, training examples selected by PRISM lead to stronger fine-tuning performance. Second, in the safety-critical SFT repair setting for emergent misalignment \citep{betley2025emergent}, PRISM mitigates harmful model behaviors more effectively under the same removal budget. Overall, these findings demonstrate that PRISM improves the efficiency of data selection by providing a more precise characterization of the target behavior.

\section{Methodology}
\label{sec:method}

We now introduce PRISM (\textbf{Pr}eference-aware \textbf{I}nfluence function based Data \textbf{S}election \textbf{M}ethod), which improves influence-based data selection by constructing the target direction in a preference-aware way. Instead of treating all target examples equally, PRISM weights them by the model's preference and then traces the resulting target direction back to candidate training examples through influence scoring. Section~\ref{sec:method-preliminaries} introduces the notation, Section~\ref{sec:method-framework} derives the PRISM score, and Section~\ref{sec:theory} provides the theoretical analysis.

\subsection{Preliminaries}
\label{sec:method-preliminaries}

\noindent\textbf{Problem setup.} Let \(\mathcal{D}=\{z_i\}_{i=1}^n\) denote a candidate training pool, where each example \(z_i=(x_i,y_i)\) contains a query \(x_i\) and a training response \(y_i\). Let \(p_\theta\) denote the model distribution parameterized by \(\theta\), before training on the selected data. A target behavior is specified by a set of target queries \(\mathcal Q\). For each \(q\in\mathcal Q\), we use a paired target example \((q,y_q^+,y_q^-)\): \(y_q^+\) is the target-positive response whose behavior we want to align, and \(y_q^-\) is a target-negative response used as the contrast under the same query.

For any query-response pair \((u,v)\), define the sequence probability as
{\small
\begin{equation}
p_\theta(v\mid u)
=
\prod_{t=1}^{|v|}
p_\theta(v_t\mid u,v_{<t}).
\label{eq:method-prob}
\end{equation}
}
To avoid length bias when comparing target responses of different lengths, PRISM uses the length-normalized likelihood
{\small
\begin{equation}
\bar p_\theta(v\mid u)
=
p_\theta(v\mid u)^{1/|v|}.
\label{eq:method-normalized-prob}
\end{equation}
}
The corresponding token-average cross-entropy loss is
{\small
\begin{equation}
\bar\ell(u,v;\theta)
=
-\log \bar p_\theta(v\mid u)
=
\frac{1}{|v|}\ell(u,v;\theta).
\label{eq:method-normalized-loss}
\end{equation}
}
At the model \(\theta\), target-response gradients use the token-average loss, while candidate-training gradients follow the SFT objective:
{\small
\begin{equation}
\begin{aligned}
\bar g_{q,y_q}
=
\nabla_{\vartheta}\bar\ell(q,y_q;\vartheta)\big|_{\vartheta=\theta},
g_{z_i}
=
\nabla_{\vartheta}\ell(x_i,y_i;\vartheta)\big|_{\vartheta=\theta}.
\end{aligned}
\label{eq:method-gradients}
\end{equation}
}
Here \(\bar g_{q,y_q^+}\) and \(\bar g_{q,y_q^-}\) are the length-normalized gradients for generating the positive and negative target responses from \(q\), while \(g_{z_i}\) is the gradient for generating the training response \(y_i\) from \(x_i\).
\begin{figure*}[t]
\centering
\includegraphics[width=0.96\textwidth]{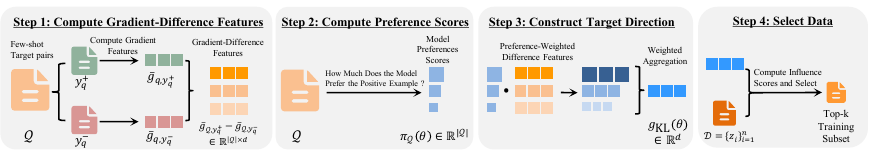}
\caption{Overview of PRISM. Given paired target examples, PRISM estimates the model's preference for the target-positive response, constructs a preference-aware target direction, and scores candidate training examples by influence alignment with this direction.}

\label{fig:prism-framework}
\vspace{-0.3cm}
\end{figure*}
\noindent\textbf{Local curvature.} We use \(H\) to denote the empirical Hessian of the training objective over the training distribution:
{\small
\begin{equation}
H=
\frac{1}{n}
\sum_{j=1}^{n}
\nabla_\theta^2 \ell(x_j,y_j;\theta).
\end{equation}
}
For the local influence analysis, we assume \(H\) is positive definite so that \(H^{-1}\) is well defined.

\noindent\textbf{Influence-based selection form.} At a high level, influence-based data selection scores each candidate training example by how well its training signal aligns with a target-behavior direction \citep{koh2017understanding,xia2024less,NEURIPS2025_715dbde2}. We denote this parameter-space representation of the target behavior by \(g_{\mathrm{target}}\). This common scoring form can be written as
{\small
\begin{equation}
s(z_i)
=
g_{z_i}^{\top}H^{-1}g_{\mathrm{target}},
\label{eq:method-general-influence}
\end{equation}
}
where \(g_{\mathrm{target}}\) represents the target behavior. Under this view, the key design question is how to construct \(g_{\mathrm{target}}\). Existing influence-based methods often set \(g_{\mathrm{target}}\) by equally aggregating gradients from target examples \citep{xia2024less,NEURIPS2025_715dbde2}. PRISM instead derives \(g_{\mathrm{target}}\) from a target-preference reward, yielding the preference-aware direction \(g_{\mathrm{KL}}\).

\subsection{PRISM Framework}
\label{sec:method-framework}

We now derive the PRISM score by constructing a preference-aware target direction and tracing it back to candidate training examples. Figure~\ref{fig:prism-framework} illustrates the resulting workflow.

\noindent\textbf{Model preference.} PRISM uses the model's relative length-normalized likelihood of the target-positive response as a proxy for how close this target example is to the model's behavior. For each paired target example, we define
{\small
\begin{equation}
\pi_q
=
\frac{
\bar p_{\theta}\!\left(y_q^+ \mid q\right)
}{
\bar p_{\theta}\!\left(y_q^+ \mid q\right)
+
\bar p_{\theta}\!\left(y_q^- \mid q\right)
}.
\label{eq:method-preference}
\end{equation}
}
This form is a two-response normalization over geometric-mean likelihoods: \(\pi_q\) is large when the model assigns higher length-normalized likelihood to the target-positive response than to the target-negative contrast. Thus, \(\pi_q\) measures the model's preference for the target-positive response within that pair. A larger \(\pi_q\) indicates that the target-positive response is closer to the model's generation tendency and should provide a more actionable local signal for data selection.

\noindent\textbf{Target-preference reward.} PRISM aims to improve the model’s preference for the target-positive response over the target-negative contrast. To make this objective explicit, we define a binary preference distribution for each target query,
{\small
\begin{equation}
P_q^\theta
=
\bigl(\pi_q(\theta),1-\pi_q(\theta)\bigr),
\label{eq:theory-P}
\end{equation}
}
where the first coordinate corresponds to the target-positive response. Let \(A_q=(0,1)\) denote the degenerate distribution that always selects the target-negative response. To turn the desired shift toward the positive response into an optimizable quantity, we measure how far the model's preference distribution moves away from this always-negative distribution. We define the target-preference reward as the KL divergence from \(A_q\) to \(P_q^\theta\):
{\small
\begin{equation}
\begin{aligned}
\mathcal K(\theta)
&=
\frac{1}{|\mathcal{Q}|}
\sum_{q\in\mathcal{Q}}
D_{\mathrm{KL}}(A_q\|P_q^\theta)\\
&=
\frac{1}{|\mathcal{Q}|}
\sum_{q\in\mathcal{Q}}
-\log\!\left(1-\pi_q(\theta)\right).
\end{aligned}
\label{eq:theory-K}
\end{equation}
}
This reward is small when the model stays close to the target-negative response and becomes larger when the model favors the target-positive response. It therefore directly measures how strongly the model tends toward the target behavior described by the paired examples, increasing this reward means making the model more likely to produce the desired behavior described by the paired examples. Appendix~\ref{app:proof-kl-basic} gives the closed form and related bounds.

\noindent\textbf{Preference-aware target direction.} Differentiating the target-preference reward rewrites this objective as a parameter-space direction. Because the length-normalized response likelihood is represented through the token-average loss in Eq.~\eqref{eq:method-normalized-loss}, we obtain
{\small
\begin{equation}
\begin{aligned}
-\nabla_\theta\mathcal K(\theta)
&=
\frac{1}{|\mathcal{Q}|}
\sum_{q\in\mathcal{Q}}
\pi_q(\theta)
\Bigl(
\bar g_{q,y_q^+}
-
\bar g_{q,y_q^-}
\Bigr)\\
&\triangleq
g_{\mathrm{KL}}(\theta).
\end{aligned}
\label{eq:method-gkl}
\end{equation}
}
Equation~\eqref{eq:method-gkl} is the preference-aware target direction traced by PRISM. The coefficient \(\pi_q\) is therefore not an ad hoc weight: it is induced by the gradient of the target-preference reward. Target examples with higher model preference contribute more to the direction because changing the model around those examples yields larger local gain in the reward. Appendix~\ref{app:proof-grad-kl} gives the derivation.

\noindent\textbf{Influence scoring.} We next trace the preference-aware target direction back to candidate training examples. Following influence function based data attribution \citep{koh2017understanding,xia2024less}, each candidate example \(z_i\in\mathcal D\) is scored by its Hessian-preconditioned alignment with the target direction:
{\small
\begin{equation}
s_{\pi}(z_i)
=
g_{z_i}^\top H^{-1}g_{\mathrm{KL}}.
\label{eq:method-score}
\end{equation}
}
A larger score means that the training signal of \(z_i\) is more aligned with the preference-aware target behavior under the local curvature of the training objective. PRISM ranks candidate examples by \(s_\pi\).
\begin{table}[t]
\centering
\scriptsize
\resizebox{\linewidth}{!}{
\begin{tabular}{lll}
\toprule
Method & Target direction & Score for \(z_i\) \\
\midrule
Equal aggregation
&
\(\frac{1}{|\mathcal Q|}\sum_{q\in\mathcal Q}(\bar g_{q,y_q^+}-\bar g_{q,y_q^-})\)
&
\(g_{z_i}^{\top}H^{-1}g_0\)
\\
PRISM
&
\(\frac{1}{|\mathcal Q|}\sum_{q\in\mathcal Q}\pi_q(\bar g_{q,y_q^+}-\bar g_{q,y_q^-})\)
&
\(g_{z_i}^{\top}H^{-1}g_{\mathrm{KL}}\)
\\
\bottomrule
\end{tabular}}
\caption{Comparison between equal aggregation and PRISM under the same influence-scoring form. PRISM changes the target direction by deriving preference weights from the target-preference reward.}
\label{tab:method-comparison}
\vspace{-0.3cm}
\end{table}

\subsection{Theoretical Analysis of PRISM}
\label{sec:theory}

Since \(g_{\mathrm{KL}}\) is the negative gradient of the target-preference reward, the remaining question is whether tracing this direction back to data yields an efficient selector. Theoretically, we proceed in two steps. First, Theorem~\ref{thm:influence-gain} shows that the PRISM score estimates how much each training example improves the target-preference reward. Second, Theorem~\ref{thm:equal-gap} compares PRISM with equal aggregation: the preference-aware direction gives at least as large a first-order reward gain under the same local budget, and its top-ranked subset yields at least as large a first-order gain under the same selection budget. The analysis is local around the model \(\theta\) and uses the positive-definite Hessian \(H\) defined above. We state the main results here and defer detailed derivations to Appendix~\ref{app:theory-proofs}.

\noindent\textbf{Influence score as reward gain.} We first justify the score used for ranking training examples. Theorem~\ref{thm:influence-gain} shows that \(s_\pi(z_i)\) is not merely an alignment score: it is the first-order change in the target-preference reward caused by infinitesimally upweighting \(z_i\).
\begin{theorem}[Influence score estimates target-preference gain]
\label{thm:influence-gain}
For any candidate example \(z_i\), let \(\theta_{\epsilon,i}\) be the local optimum after infinitesimally upweighting \(z_i\). Then the first-order change in the paired KL reward is
{\small
\begin{equation}
\frac{d}{d\epsilon}
\mathcal K(\theta_{\epsilon,i})\Big|_{\epsilon=0}
=
 g_{z_i}(\theta)^\top H^{-1}g_{\mathrm{KL}}(\theta)
\triangleq
h_\pi(z_i;\theta).
\label{eq:theory-hpi}
\end{equation}
}
For a size-\(m\) subset \(S\), the first-order effect is
{\small
\begin{equation}
\frac{d}{d\epsilon}\mathcal K(\theta_{S,\epsilon})\Big|_{\epsilon=0}
=
\frac{1}{m}\sum_{z_i\in S}h_\pi(z_i;\theta).
\label{eq:theory-remove-derivative}
\end{equation}
}
Therefore, selecting the top-\(m\) examples by \(h_\pi\) maximizes the first-order paired KL reward gain under the fixed budget.
\end{theorem}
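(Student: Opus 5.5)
The plan is the standard influence-function argument of \citet{koh2017understanding}, combined with the chain rule and the gradient identity in Eq.~\eqref{eq:method-gkl}. Define the perturbed objective
\[
L_{\epsilon,i}(\vartheta)=\frac{1}{n}\sum_{j=1}^n \ell(x_j,y_j;\vartheta)+\epsilon\,\ell(x_i,y_i;\vartheta),
\]
and let \(\theta_{\epsilon,i}\) be its local minimizer. I assume that \(\theta=\theta_{0,i}\) is a stationary point of the unperturbed objective, which is implicit in the influence setup, so that \(\frac1n\sum_j\nabla\ell(x_j,y_j;\theta)=0\).

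\emph{Step 1 (parameter response).} Apply the implicit function theorem to the first-order condition \(\nabla_\vartheta L_{\epsilon,i}(\theta_{\epsilon,i})=0\). Its Jacobian in \(\vartheta\) at \(\epsilon=0\) is exactly \(H\), which is positive definite and hence invertible. So \(\epsilon\mapsto\theta_{\epsilon,i}\) is differentiable near \(0\). Differentiating the stationarity condition at \(\epsilon=0\) gives
\[
H\,\frac{d\theta_{\epsilon,i}}{d\epsilon}\Big|_{0}+g_{z_i}=0,
\qquad\text{so}\qquad
\frac{d\theta_{\epsilon,i}}{d\epsilon}\Big|_{0}=-H^{-1}g_{z_i}.
\]

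\emph{Step 2 (chain rule).} \(\mathcal K\) is smooth in \(\theta\), since each \(\pi_q\in(0,1)\). Therefore
\[
\frac{d}{d\epsilon}\mathcal K(\theta_{\epsilon,i})\big|_0=\nabla_\theta\mathcal K(\theta)^\top\bigl(-H^{-1}g_{z_i}\bigr).
\]
Substitute \(-\nabla_\theta\mathcal K=g_{\mathrm{KL}}\) from Eq.~\eqref{eq:method-gkl}, whose derivation is in Appendix~\ref{app:proof-grad-kl}. Then use the symmetry of \(H^{-1}\) to rewrite \(g_{\mathrm{KL}}^\top H^{-1}g_{z_i}\) as \(g_{z_i}^\top H^{-1}g_{\mathrm{KL}}=h_\pi(z_i;\theta)\). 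This proves Eq.~\eqref{eq:theory-hpi}.

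The sign bookkeeping is the only delicate point, because two minus signs cancel. The first comes from the parameter response \(-H^{-1}g_{z_i}\); the second comes from \(g_{\mathrm{KL}}\) being the \emph{negative} gradient of \(\mathcal K\). I will check the gradient identity once directly. Writing \(\log\pi_q-\log(1-\pi_q)=\bar\ell(q,y_q^-)-\bar\ell(q,y_q^+)\) gives
\[
\nabla\bigl(-\log(1-\pi_q)\bigr)=\pi_q\bigl(\nabla\bar\ell(q,y_q^-)-\nabla\bar\ell(q,y_q^+)\bigr)=-\pi_q\bigl(\bar g_{q,y_q^+}-\bar g_{q,y_q^-}\bigr),
\]
which confirms Eq.~\eqref{eq:method-gkl}.

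\emph{Step 3 (subsets and budget).} For a size-\(m\) subset \(S\), upweight each of its members by \(\epsilon/m\), i.e.\ add \(\frac{\epsilon}{m}\sum_{z_i\in S}\ell(x_i,y_i;\vartheta)\) to the objective. Repeating Steps 1--2 gives the parameter response \(-H^{-1}\frac1m\sum_{z_i\in S}g_{z_i}\). Linearity then yields Eq.~\eqref{eq:theory-remove-derivative}.

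The optimality claim follows because the first-order gain \(\frac1m\sum_{z_i\in S}h_\pi(z_i;\theta)\) is additive and separable over examples. Maximizing a sum of fixed per-example scores subject to \(|S|=m\) is solved by taking the \(m\) largest scores; an exchange argument shows this, since swapping any chosen example for an unchosen one with a larger score cannot decrease the sum. Ties only affect which maximizer is chosen, not the maximal value.

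The main obstacle is Step 1: making the local-optimum assumption explicit so that the implicit function theorem applies and the derivative is well defined. Everything after that is linear algebra.
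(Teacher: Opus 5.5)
Your proposal is correct and follows essentially the same route as the paper's proof. Both differentiate the stationarity condition of the upweighted objective to get $-H^{-1}g_{z_i}$, apply the chain rule with $\nabla_\theta\mathcal K=-g_{\mathrm{KL}}$, replace $g_{z_i}$ by the averaged gradient $\frac{1}{m}\sum_{z_i\in S}g_{z_i}$ for subsets, and observe that a separable sum of scores is maximized by the top-$m$ examples. Your explicit stationarity assumption at $\theta$, the implicit-function-theorem justification, the symmetry of $H^{-1}$, and the direct sign check only spell out what the paper leaves implicit.
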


The proof follows the standard influence functions differentiation of an infinitesimally upweighted training objective and is given in Appendix~\ref{app:proof-influence-gain}. Theorem~\ref{thm:influence-gain} gives the operational meaning of the PRISM score. The Hessian-preconditioned alignment \(g_{z_i}^\top H^{-1}g_{\mathrm{KL}}\) is not merely a similarity score; it estimates the first-order gain in target preference caused by training on \(z_i\).

We next analyze what changes when the target direction is constructed by equal aggregation. The comparison has two parts: equal aggregation may construct a target direction that deviates from the reward-maximizing preference-aware direction, and ranking examples with this deviated direction may produce a lower-gain top-\(m\) subset under the same reward. Equal aggregation removes the preference coefficients and uses

{\small
\begin{equation}
\begin{aligned}
g_0(\theta)
&=
\frac{1}{|\mathcal Q|}
\sum_{q\in\mathcal Q}
\Bigl(
 \bar g_{q,y_q^+}
 -
 \bar g_{q,y_q^-}
\Bigr),\\
h_0(z_i;\theta)
&=
g_{z_i}(\theta)^\top H^{-1}g_0(\theta).
\end{aligned}
\label{eq:theory-h0}
\end{equation}
}
\begin{table*}[!t]
\centering
\scriptsize
\setlength{\tabcolsep}{2pt}
\resizebox{\linewidth}{!}{
\begin{tabular}{lcccccccccccc}
\toprule
Method & & MMLU & BBH & TyDi F1 & TyDi EM & MATH-500
&  & MMLU & BBH & TyDi F1 & TyDi EM & MATH-500 \\
\midrule
\textsc{Random} & \multirow{7}{*}{\rotatebox[origin=c]{90}{Llama-2-7B}}
& 45.74 & 38.06 & 46.93 & 31.44 & 4.67
& \multirow{7}{*}{\rotatebox[origin=c]{90}{Llama-2-13B}} & 52.79 & 49.09 & 47.39 & 30.68 & 5.08 \\
\textsc{LESS} &
& \textbf{47.07} & \textbf{41.70} & 52.76 & 34.15 & 5.58
& & 54.25 & \textbf{50.09} & 52.12 & 34.98 & 6.17 \\
\textsc{VecFilter} &
& 45.10 & 37.35 & 46.62 & 31.95 & 5.08
& & 52.33 & 49.04 & 49.07 & 31.91 & 5.83 \\
\textsc{ProjDiff} &
& 44.70 & 39.69 & 47.26 & 31.30 & 4.83
& & 52.88 & 39.69 & 47.26 & 31.30 & 4.83 \\
\textsc{ConceptInf} &
& 44.15 & 37.87 & 47.43 & 28.91 & 2.75
& & 52.98 & 49.04 & 42.32 & 24.30 & 4.42 \\
\textbf{IF-Guide} &
& 47.00 & 37.31 & 52.99 & 36.39 & \textbf{6.33}
& & 54.24 & 49.17 & 51.44 & 34.74 & 6.00 \\
\textsc{PRISM} &
& 45.43 & 38.12 & \textbf{54.23} & \textbf{38.93} & \textbf{6.33}
& & \textbf{54.38} & 49.75 & \textbf{56.60} & \textbf{41.76} & \textbf{6.25} \\
\midrule
\textsc{Random} & \multirow{7}{*}{\rotatebox[origin=c]{90}{Qwen3-8B}}
& 75.30 & 73.80 & 69.81 & 49.74 & 22.50
& \multirow{7}{*}{\rotatebox[origin=c]{90}{Llama-3.1-8B-Inst.}} & 66.38 & 69.35 & 66.28 & 44.30 & 12.50 \\
\textsc{LESS} &
& 74.69 & 74.17 & 70.15 & 48.75 & 23.83
& & 66.93 & 68.02 & 68.27 & 48.06 & 13.83 \\
\textsc{VecFilter} &
& \textbf{75.32} & 73.95 & 68.75 & 47.46 & \textbf{24.83}
& & 65.30 & 68.49 & 67.80 & 45.65 & 11.08 \\
\textsc{ProjDiff} &
& 75.11 & 75.37 & 69.34 & 49.77 & 23.33
& & 66.59 & \textbf{70.37} & 65.41 & 43.61 & 10.67 \\
\textsc{ConceptInf} &
& 74.90 & 72.35 & 66.51 & 46.65 & 21.00
& & 66.52 & 69.60 & 62.80 & 38.73 & 8.00 \\
\textsc{IF-Guide} &
& 68.72 & 74.68 & 72.71 & 54.07 & 22.50
& & 66.99 & 67.78 & 70.83 & 50.02 & 13.33 \\
\textbf{PRISM} &
& 74.12 & \textbf{75.71} & \textbf{74.14} & \textbf{54.99} & \textbf{24.83}
& & \textbf{67.11} & 69.94 & \textbf{70.91} & \textbf{50.19} & \textbf{13.92} \\
\bottomrule
\end{tabular}
}
\caption{General benchmark performance after fine-tuning on the top \(5\%\) selected data. Each reported value is averaged over three runs. Columns are grouped into two side-by-side model blocks, and rows are data-selection methods. Higher is better for all metrics, and bold marks the best method in each column.}
\label{tab:general-5pct-ft}
\vspace{-0.3cm}
\end{table*}

\begin{theorem}[Direction and selection gaps of equal aggregation]
\label{thm:equal-gap}
\textbf{Direction gap.}
To compare update directions under the same amount of local training movement, we use the local budget \(\delta\theta^\top H\delta\theta\le c\), which fixes the update size in the local geometry around the model. Let \(\delta\theta^\star\) be the \(H\)-normalized update induced by the PRISM direction \(H^{-1}g_{\mathrm{KL}}\), and let \(\delta\theta_{\mathrm{eq}}\) be the \(H\)-normalized update induced by the equal-aggregation direction \(H^{-1}g_0\). Then the first-order target-preference reward gain of PRISM is no smaller:
{\small
\begin{equation}
-\nabla_\theta\mathcal K(\theta)^\top\delta\theta^\star
\ge
-\nabla_\theta\mathcal K(\theta)^\top\delta\theta_{\mathrm{eq}}.
\label{eq:theory-direction-gap-main}
\end{equation}
}
Equality holds only when \(g_0\) is aligned with \(g_{\mathrm{KL}}\) in the \(H^{-1}\) geometry.

\textbf{Selection gap.}
If \(S_\pi^{(m)}\) and \(S_0^{(m)}\) are the top-\(m\) sets under \(h_\pi\) and \(h_0\), respectively, when both subsets are evaluated by the paired KL reward,
{\small
\begin{equation}
\mathcal K(\theta_{S_\pi^{(m)},\epsilon})
-
\mathcal K(\theta_{S_0^{(m)},\epsilon})
=
\epsilon\,\Delta_m+o(\epsilon),
\label{eq:theory-gap-main}
\end{equation}
}
where \(\Delta_m\ge 0\). The inequality is strict whenever the equal-aggregation top-\(m\) set has a smaller total PRISM score than the PRISM top-\(m\) set.
\end{theorem}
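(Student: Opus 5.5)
The argument has two independent parts: a Cauchy--Schwarz inequality in the $H^{-1}$ inner product for the direction gap, and a rearrangement argument built on Theorem~\ref{thm:influence-gain} for the selection gap.

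\textbf{Direction gap.} I first make the $H$-normalized updates explicit. For any nonzero vector $v$, the update along $H^{-1}v$ saturating the budget $\delta\theta^\top H\delta\theta\le c$ is
\[
\delta\theta(v)=\sqrt{c}\,\frac{H^{-1}v}{\sqrt{v^\top H^{-1}v}} .
\]
Then $\delta\theta^\star=\delta\theta(g_{\mathrm{KL}})$ and $\delta\theta_{\mathrm{eq}}=\delta\theta(g_0)$. By Eq.~\eqref{eq:method-gkl}, $-\nabla_\theta\mathcal K=g_{\mathrm{KL}}$, so the first-order gain of an update $\delta\theta(v)$ is
\[
g_{\mathrm{KL}}^\top\delta\theta(v)=\sqrt{c}\,\frac{\langle g_{\mathrm{KL}},v\rangle_{H^{-1}}}{\|v\|_{H^{-1}}},
\qquad \langle a,b\rangle_{H^{-1}}=a^\top H^{-1}b .
\]
This inner product is well defined because $H\succ0$. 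Cauchy--Schwarz in this inner product bounds the ratio by $\|g_{\mathrm{KL}}\|_{H^{-1}}$, and $v=g_{\mathrm{KL}}$ attains the bound. This gives Eq.~\eqref{eq:theory-direction-gap-main}. Equality holds exactly when $g_0$ is a positive multiple of $g_{\mathrm{KL}}$, which is the stated alignment condition.

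Equivalently, $\delta\theta^\star$ is the solution of the trust-region problem $\max g_{\mathrm{KL}}^\top\delta\theta$ subject to $\delta\theta^\top H\delta\theta\le c$, which I can solve via a Lagrangian. The same Cauchy--Schwarz step covers both formulations. One degenerate case needs a convention: if $g_0=0$, I set $\delta\theta_{\mathrm{eq}}=0$.

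\textbf{Selection gap.} I apply Eq.~\eqref{eq:theory-remove-derivative} from Theorem~\ref{thm:influence-gain} to both subsets. It gives
\[
\mathcal K(\theta_{S,\epsilon})=\mathcal K(\theta)+\frac{\epsilon}{m}\sum_{z_i\in S}h_\pi(z_i;\theta)+o(\epsilon),
\]
for $S=S_\pi^{(m)}$ and for $S=S_0^{(m)}$. The zeroth-order term $\mathcal K(\theta)$ is the same for both, since both expansions are taken at the same unperturbed model. Subtracting, I obtain
\[
\Delta_m=\frac{1}{m}\Bigl(\sum_{S_\pi^{(m)}}h_\pi-\sum_{S_0^{(m)}}h_\pi\Bigr).
\]
Note that $S_0^{(m)}$ is evaluated by $h_\pi$, not by $h_0$. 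Nonnegativity then follows from an elementary rearrangement fact: the $m$ largest values of $h_\pi$ have the maximal sum over all $m$-subsets. The strictness claim is immediate from this formula for $\Delta_m$.

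The main obstacle is making the $o(\epsilon)$ terms rigorous. Theorem~\ref{thm:influence-gain} only supplies the derivative at $\epsilon=0$. I would assume $\mathcal K$ and the implicit map $\epsilon\mapsto\theta_{S,\epsilon}$ are $C^1$, which follows from the implicit function theorem with $H\succ0$. With that assumption, each expansion is a first-order Taylor statement, and the difference of the two remainders is again $o(\epsilon)$.
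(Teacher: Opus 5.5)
Your proposal is correct and follows essentially the same route as the paper: Cauchy--Schwarz in the $H^{-1}$ inner product for the direction gap, equivalently the cosine ratio $\cos_{H^{-1}}(g_{\mathrm{KL}},g_0)\le 1$, and, for the selection gap, subtracting the two first-order expansions from Theorem~\ref{thm:influence-gain} and using that the top-$m$ set under $h_\pi$ maximizes the $h_\pi$-sum over all size-$m$ subsets. Your added remarks on the $g_0=0$ convention and the remainder terms are harmless refinements; differentiability of $\epsilon\mapsto\mathcal K(\theta_{S,\epsilon})$ at $\epsilon=0$ alone already gives the $o(\epsilon)$ form, so the $C^1$ assumption is more than you need.
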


Theorem~\ref{thm:equal-gap} explains the advantage over equal aggregation. The issue is not the influence functions scoring mechanism itself, but the target direction being traced back to the data. If the equal-aggregation direction deviates from \(g_{\mathrm{KL}}\), it loses first-order alignment with the reward-maximizing direction; the samples selected along that direction can then deliver lower paired KL gain under the same data budget. Appendix~\ref{app:proof-optimal-direction} proves the optimal local direction, Appendix~\ref{app:proof-cosine-gap} gives an equivalent cosine-ratio interpretation, and Appendix~\ref{app:proof-equal-budget} proves the subset comparison.

\section{Experiments}
\label{sec:experiments}

We evaluate whether PRISM improves budget efficiency in two complementary data-selection settings. The first setting selects data to improve general capabilities: a method receives a small training budget and the selected subset is evaluated by downstream fine-tuning performance. The second setting removes data to suppress emergent misalignment: a method ranks harmful training examples, filters them under a fixed removal budget, and the retained data are used for retraining. Together, these settings test whether a preference-aware target representation helps both promote useful behaviors and prevent harmful ones.

\subsection{Selecting Data to Improve General Capabilities}
\label{sec:general-efficient-ft}

\subsubsection{Setup}

\noindent\textbf{Task.} We first evaluate whether PRISM can select a small subset of training data that efficiently improves general capabilities. Following the efficient instruction-tuning setup of \citet{xia2024less}, each method selects \(5\%\) of the candidate pool for fine-tuning, and the fine-tuned model is evaluated on general benchmarks.

\noindent\textbf{Models and candidate data.} We fine-tune four base models: Llama-2-7B, Llama-2-13B \citep{touvron2023llama}, Qwen3-8B \citep{yang2025qwen3technicalreport}, and Llama-3.1-8B-Instruct \citep{grattafiori2024llama3herdmodels}. The candidate pool contains about 270K instruction-tuning examples assembled from FLAN V2 \citep{longpre2023flan}, chain-of-thought data \citep{wei2022chain}, Dolly \citep{DatabricksBlog2023DollyV2}, and OpenAssistant \citep{kopf2023openassistant}. These data cover heterogeneous instruction formats and reasoning tasks.

\noindent\textbf{Evaluation.} We evaluate the resulting models on MMLU \citep{hendrycks2020measuring}, BBH \citep{suzgun-etal-2023-challenging}, TyDi QA \citep{clark2020tydi}, and MATH-500 \citep{Hendrycks2021MeasuringMP,ICLR2024_aca97732}, covering factual knowledge, challenging reasoning, multilingual question answering, and mathematical problem solving.

\noindent\textbf{Baselines.} We compare PRISM with a shared set of data-selection baselines. \textsc{LESS} selects influential data for targeted instruction tuning by aligning candidate examples with target-example gradients \citep{xia2024less}. Relative to PRISM, it uses the influence functions signal without preference weighting. \textsc{IF-Guide} follows the influence function guided attribution baseline of \citet{NEURIPS2025_715dbde2}. \textsc{VecFilter}, \textsc{ProjDiff}, and \textsc{ConceptInf} follow the Vector Filter, Projection Difference, and Concept Influence scores of \citet{kowal2026concept}. \textsc{VecFilter} is an embedding-similarity baseline that ranks examples by activation-space similarity to the target representation; \textsc{ProjDiff} and \textsc{ConceptInf} use activation-difference and gradient-based concept-attribution variants. \textsc{Random} selects uniformly at random.

\begin{table*}[!t]
\centering
\scriptsize
\setlength{\tabcolsep}{3pt}
\resizebox{\linewidth}{!}{
\begin{tabular}{lrrrrrrrrrrrrrrrrrrrr}
\toprule
Method
& \multicolumn{8}{c}{Llama-3.1-8B-Instruct}
& \multicolumn{4}{c}{Qwen-3-8B}
& \multicolumn{8}{c}{Qwen-3-14B} \\
\cmidrule(lr){2-9}\cmidrule(lr){10-13}\cmidrule(lr){14-21}
Data family
& \multicolumn{4}{c}{Mistake math}
& \multicolumn{4}{c}{Mistake medical}
& \multicolumn{4}{c}{Mistake math}
& \multicolumn{4}{c}{Insecure code}
& \multicolumn{4}{c}{Mistake math} \\
\cmidrule(lr){2-5}\cmidrule(lr){6-9}\cmidrule(lr){10-13}\cmidrule(lr){14-17}\cmidrule(lr){18-21}
Harmful ratio
& 1\% & 10\% & 20\% & 50\%
& 1\% & 10\% & 20\% & 50\%
& 1\% & 10\% & 20\% & 50\%
& 1\% & 10\% & 20\% & 50\%
& 1\% & 10\% & 20\% & 50\% \\
\midrule
Mixed & 48.54 & 50.73 & 55.84 & 60.22 & 47.81 & 51.83 & 52.19 & 64.96 & 48.18 & 54.02 & 56.57 & 60.58 & 51.10 & 58.76 & 62.41 & 64.96 & 57.30 & 62.04 & 66.79 & 68.61 \\
\midrule
\textsc{Random} & \textbf{44.53} & 51.46 & 50.37 & 60.58 & 47.45 & 56.93 & 51.10 & 60.22 & 51.83 & 51.46 & 58.03 & 62.77 & 51.10 & 59.12 & 60.95 & 61.31 & 58.39 & 62.77 & 67.88 & 70.07 \\
\textsc{LESS} & 44.89 & 51.46 & 52.92 & 60.58 & 49.27 & 51.10 & 55.11 & 69.34 & 44.16 & 51.46 & 56.57 & 62.41 & 51.46 & 61.31 & 62.41 & 62.77 & 56.93 & 63.87 & 66.79 & 68.61 \\
\textsc{VecFilter} & 45.62 & 53.29 & 52.92 & 58.03 & 48.18 & 54.75 & 53.65 & 63.87 & 45.62 & 48.54 & 52.92 & \textbf{55.11} & 52.56 & \textbf{54.02} & 62.41 & 60.95 & 55.11 & 60.95 & 64.23 & 67.52 \\
\textsc{ProjDiff} & 47.08 & 47.81 & 54.75 & 60.22 & 52.92 & 51.83 & 51.46 & 62.77 & 50.73 & 52.56 & 54.75 & 60.22 & 51.10 & 64.96 & 63.14 & 60.95 & 57.30 & 62.41 & 64.23 & 67.15 \\
\textsc{ConceptInf} & 45.62 & 51.10 & 51.10 & 56.93 & 48.18 & 50.00 & 58.03 & 67.52 & 48.91 & 49.64 & 56.57 & 60.22 & 49.27 & 57.66 & 61.68 & 62.77 & 56.57 & 67.88 & 66.42 & 71.90 \\
\textsc{IF-Guide} & 49.64 & 47.45 & 53.29 & \textbf{55.47} & 48.54 & 51.10 & 55.47 & 56.93 & 47.45 & 51.10 & 54.02 & 58.76 & 49.27 & 59.12 & 59.85 & 62.77 & \textbf{52.19} & 59.12 & 65.33 & 67.15 \\
\midrule
\textsc{PRISM} & \textbf{44.53} & \textbf{46.35} & \textbf{47.81} & 58.76 & \textbf{45.26} & \textbf{47.45} & \textbf{49.27} & \textbf{55.84} & \textbf{44.16} & \textbf{47.08} & \textbf{51.10} & 58.39 & \textbf{48.91} & 57.66 & \textbf{58.76} & \textbf{60.22} & 53.29 & \textbf{58.76} & \textbf{62.77} & \textbf{65.33} \\
\bottomrule
\end{tabular}}
\caption{Dishonesty rate after filtering and retraining on the reverse repair settings. Each reported value is from one run. Columns are grouped by model and data family, with the harmful-data ratio row shown explicitly. Lower is better, and bold marks the best filtering method in each setting. \textsc{Mixed} is the unfiltered mixed-data SFT result.}
\label{tab:filter-retrain}
\vspace{-0.3cm}
\end{table*}

\subsubsection{Results and Analysis}

\noindent\textbf{PRISM-selected data improves fine-tuning efficiency under the same small data budget.} Table~\ref{tab:general-5pct-ft} shows that PRISM achieves the best or tied-best result in most model--metric columns when all methods select the same top \(5\%\) of candidate data. The gains appear across Llama-2, Qwen3, and Llama-3.1 model families, indicating that the benefit is not tied to a single base model. They also cover multiple capability types, including factual knowledge, reasoning, multilingual question answering, and mathematical problem solving. These results suggest that using model preference to construct the target direction helps PRISM identify training examples with higher utility, allowing the selected subset to produce stronger downstream performance with the same budget.

\subsection{Reverse Repair for Safety-Critical SFT}
\label{sec:reverse-repair}

\subsubsection{Setup}

\noindent\textbf{Task.} We next evaluate whether PRISM can identify training examples that promote emergent misalignment. Each method ranks the candidate training data, removes the same fraction of high-risk examples, and reruns the same SFT pipeline on the retained data. The other setting follows the same selection principle as the previous experiment.

\noindent\textbf{Models.} We run experiments on three instruction-tuned models: Llama-3.1-8B-Instruct \citep{grattafiori2024llama3herdmodels}, Qwen-3-8B, and Qwen-3-14B \citep{yang2025qwen3technicalreport}. Their base dishonesty rates before the harmful SFT intervention are \(53.29\), \(52.92\), and \(59.85\), respectively.

\noindent\textbf{Data conditions.} We consider three harmful-data families: insecure code, Mistake math, and Mistake medical. The data conditions are constructed from the experimental data released by \citet{hu2025llmsDeceive}, which introduces these harmful-data families for studying emergent misalignment in dishonesty. For each family, we compare pure harmful data, pure correct data, and mixed datasets with 1\%, 10\%, 20\%, and 50\% harmful data. Our evaluation follows their MASK provided-facts setup.

\noindent\textbf{Evaluation.} All repair scores are measured on the MASK provided-facts benchmark; we report dishonesty rate, where lower is better. For ranking quality before retraining, we report AUROC, where higher is better.

Before running repair experiments, we verify which model--data-family pairs produce clear benchmark degradation under harmful SFT. Appendix~\ref{app:benchmark-degradation} reports this diagnostic. Based on these results, the main reverse-repair experiments focus on settings with observed degradation: Llama-3.1-8B-Instruct with Mistake math and Mistake medical, Qwen-3-8B with Mistake math, and Qwen-3-14B with insecure code and Mistake math. Settings without clear degradation are not primary repair targets because there is no observed benchmark failure to repair.

Appendix~\ref{app:benchmark-derived-signal} describes how we construct the shared benchmark-derived paired-response signal used by all methods.

\noindent\textbf{Baselines.} We use the shared data-selection baselines from Section~\ref{sec:general-efficient-ft} in the main reverse-repair comparison.
\subsubsection{Results and Analysis}

We evaluate reverse repair from two views: harmful-example ranking before retraining and model behavior after filtering and retraining.

\noindent \textbf{PRISM achieves the best ranking quality in most settings.} Appendix~\ref{app:ranking-auroc-full} reports AUROC for each harmful-data ratio in the reverse repair settings. The ranking results support the claim that the preference-aware target direction traces examples with larger target-preference effects.

For the final repair outcome, we remove the top-ranked examples under each method using the same deletion budget and retrain on the retained data. Table~\ref{tab:filter-retrain} reports the resulting dishonesty rate for each emergent misalignment setting.

\noindent\textbf{PRISM also achieves stronger behavioral repair after retraining.} Across the reverse-repair settings, PRISM usually yields the lowest dishonesty rate, indicating that its ranking identifies removals that matter for post-SFT behavior.

\section{Analysis and Discussion}
\label{sec:analysis-discussion}

\subsection{Agent Safety Guardrail Selection}
\label{sec:agent-guardrail}

\noindent\textbf{Setup.} We further evaluate PRISM in an agent-safety guardrail setting, where the desired behavior is to improve the model's ability to provide safety guardrails for AI agents. We select from a private safety-related training pool and evaluate on ATBench \citep{li2026atbench}; Appendix~\ref{app:reproducibility} gives the setup details.

\begin{figure}[!t]
\centering
\begin{minipage}[t]{0.48\linewidth}
\centering
\includegraphics[width=\linewidth]{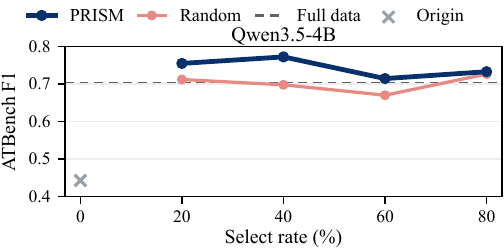}
\caption{Agent safety guardrail selection on ATBench for Qwen3.5-4B. We vary the selected-data rate and report ATBench F1 after fine-tuning. The dashed line shows full-data SFT, and the gray marker shows the origin model before fine-tuning.}
\label{fig:atbench-guardrail}
\end{minipage}\hfill
\begin{minipage}[t]{0.48\linewidth}
\centering
\includegraphics[width=\linewidth]{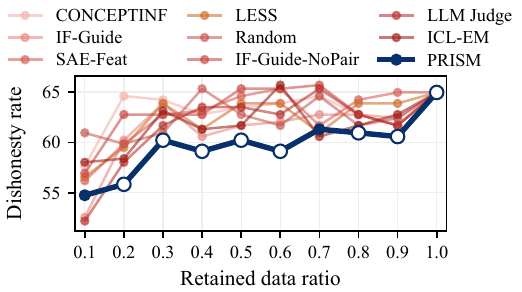}
\caption{Filtering-ratio sensitivity on Qwen-3-14B with a 50\% insecure code mixture. Values denote dishonesty rates after filtering and retraining. The x-axis shows the retained data ratio; 1.0 corresponds to the unfiltered mixed-data result shared by all methods.}
\label{fig:filter-ratio-sensitivity}
\end{minipage}
\vspace{-0.3cm}
\end{figure}

\noindent\textbf{PRISM reaches full-data guardrail performance with substantially less data.} Figure~\ref{fig:atbench-guardrail} shows ATBench F1 after fine-tuning selected subsets on Qwen3.5-4B. PRISM outperforms full-data SFT across all tested selection rates, and the best selected subset uses only \(40\%\) of the private safety data. These results show that PRISM can identify training examples that efficiently promote the desired agent-guardrail behavior, allowing a much smaller selected subset to match or surpass full-data fine-tuning.

\subsection{Additional Components for Emergent Misalignment Defense}
\label{sec:repair-specific-baselines}

\noindent\textbf{PRISM remains stronger without relying on extra components.} We additionally ask whether external components can improve defense against emergent misalignment in reverse repair. We compare PRISM with two baselines that use signals outside the shared data-selection interface. \textsc{SAE-Feat} builds on the sparse-autoencoder model-diffing finding of \citet{wang2025personaFeatures}: each example is scored by its activation on the misaligned persona feature. \textsc{LLM Judge} uses an external LLM as a direct prompt-based classifier to judge whether each example is harmful. Because these baselines require an additional SAE feature detector or an external judge model, we report them separately from the main reverse-repair tables. Table~\ref{tab:repair-specific-baselines} summarizes their performance over the same emergent misalignment settings, and Appendix~\ref{app:repair-specific-baselines} reports the full per-setting values. PRISM achieves higher average ranking quality and lower post-retraining dishonesty, showing that the preference-aware influence score remains effective without such extra components.
\begin{table}[t]
\centering
\small
\setlength{\tabcolsep}{2pt}
\begin{tabular}{lcccc}
\toprule
Method & Avg. AUROC & Best & Avg. dishonesty & Best \\
\midrule
\textsc{SAE-Feat} & 0.43 & 0/20 & 56.77 & 0/20 \\
\textsc{LLM Judge} & 0.63 & 5/20 & 56.99 & 3/20 \\
\midrule
\textsc{PRISM} & \textbf{0.66} & \textbf{15/20} & \textbf{53.08} & \textbf{19/20} \\
\bottomrule
\end{tabular}
\caption{Comparison with additional-component baselines over the same emergent misalignment settings. Avg. AUROC summarizes ranking quality, where higher is better. Avg. dishonesty summarizes post-filtering retraining outcomes, where lower is better. Best counts are computed among PRISM and the two additional-component baselines.}
\label{tab:repair-specific-baselines}
\vspace{-0.3cm}
\end{table}

\subsection{Filtering Ratio Sensitivity}
\label{sec:filter-ratio}

\noindent\textbf{PRISM remains effective across a range of deletion budgets.} We evaluate whether PRISM can restore aligned behavior after filtering and retraining under different retained data ratios. Figure~\ref{fig:filter-ratio-sensitivity} visualizes Qwen-3-14B results on the 50\% insecure code mixture from \(0.1\) to \(1.0\), where the \(1.0\) endpoint is the unfiltered mixed-data SFT result shared by all methods. Across these budgets, PRISM obtains the lowest dishonesty rate for most retained ratios and remains close to the best method in the remaining cases, showing that its ranking consistently supports effective behavioral repair under different filtering budgets.

\subsection{Component Ablations}
\label{sec:component-ablation}

\noindent\textbf{We ablate PRISM in the same reverse-repair protocol used above.} Specifically, we use the Qwen-3-14B emergent misalignment settings from Section~\ref{sec:reverse-repair}, where both insecure code and Mistake math data produce clear degradation. We ablate two components that distinguish our score from a standard influence selector. The first component is the preference weight \(\pi\), which corresponds to the paired KL gradient derived in Section~\ref{sec:method-framework}. The second component is the paired-response construction, which tests whether target failures should be represented as response pairs rather than as an unpaired attribution target. Figure~\ref{fig:ablation} visualizes both ranking quality and final dishonesty rate after filtering and retraining.

\begin{figure}[!t]
\centering
\includegraphics[width=\linewidth]{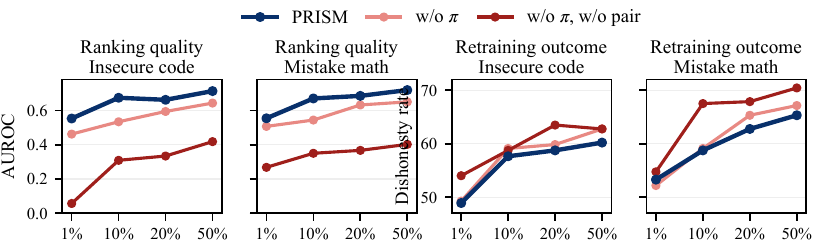}
\caption{Component ablations on Qwen-3-14B. From left to right, the four panels report ranking quality and retraining outcomes for insecure code and Mistake math settings. Ranking quality is measured by AUROC, where higher is better; retraining outcome is measured by dishonesty rate, where lower is better.}
\label{fig:ablation}
\vspace{-0.3cm}
\end{figure}

\noindent\textbf{Both preference weighting and paired responses are necessary for stable repair.} Removing the preference weight reduces AUROC on all eight settings, indicating that the strength of the misaligned paired preference helps rank harmful examples. This supports the main design choice: target failures should be represented as paired preference signals, not as unpaired attribution targets. After retraining, the full method gives the lowest dishonesty rate on seven of the eight settings.

\section{Conclusion}

In this paper, we propose PRISM, a preference-aware data selection method for budget-efficient fine-tuning. PRISM addresses a limitation of existing influence-based selection methods: they often treat all target examples equally when representing the target behavior. By using the model's preference to weight target examples, PRISM constructs a target direction that better reflects the behavior the model can be efficiently steered toward, and then traces this direction back to candidate training examples through influence scoring. Our theoretical analysis shows that the preference-aware direction yields a more effective first-order direction for increasing target-behavior preference than equal aggregation. Experiments across model families and scales further show that PRISM improves both utility-oriented fine-tuning and safety-oriented SFT repair under fixed data budgets. These results suggest that more precise target-behavior characterization is a key factor for making data selection more budget-efficient.

\section*{Limitations}

PRISM relies on target examples to specify the target behavior, so the quality and coverage of these examples can affect the resulting target direction. If the provided examples omit important modes of the behavior, the selected data may emphasize only the covered aspects. In addition, our experiments cover several model families, capabilities, and safety-repair settings, but they do not exhaustively evaluate larger-scale models, long-form generation tasks. Finally, PRISM requires computing target-example preferences and influence scores, which introduces additional selection-time cost compared with random selection or simpler heuristic filters.

\section*{Ethical Considerations}

\noindent\textbf{Research purpose and intended use.} This work studies data selection for improving the efficiency and safety of LLM fine-tuning. PRISM is intended for research on SFT data governance, budget-efficient fine-tuning, and controlled repair of undesirable emergent misalignment behaviors. We use publicly available or research-use models, datasets, and benchmarks for method development and evaluation, rather than for identifying, profiling, or making decisions about individuals.

\noindent\textbf{Use of AI assistants.} We used AI assistants to support language polishing and editorial revision during paper writing. The authors reviewed and revised all AI-assisted text, verified the technical content, and remain responsible for all analyses, claims, and final wording reported in this paper.

\noindent\textbf{Experimental setup and hyperparameters.} PRISM's core mechanism is a data-selection scoring rule rather than a hyperparameter-tuned training recipe. We therefore do not conduct hyperparameter search for the main method. We use a fixed SFT configuration across experiments and report the implementation details and hyperparameter values in Appendix~\ref{app:reproducibility}.

\noindent\textbf{Licenses and access conditions.} We report the license or access terms of the main artifacts used in this work. We checked the official release pages or model/data cards for the relevant models, datasets, and benchmarks. The Llama models are released under the Meta Llama community licenses, Qwen3 models are released under Apache-2.0, and the instruction-tuning and evaluation resources use their corresponding public terms, including Apache-2.0 releases such as FLAN, OpenAssistant, TyDi QA, and BIG-Bench, CC-BY-SA-3.0 for Dolly, and MIT-licensed releases such as MMLU, MATH, and BBH. The repair data are used under the research setting of the released benchmark data from \citet{hu2025llmsDeceive}. The artifacts we create include selected-data rankings, filtered subsets, target/test splits, and benchmark-derived repair signals. These artifacts are intended for research and reproducibility, and any redistribution or downstream use should follow the access conditions of the underlying models and datasets because filtered subsets and rankings inherit those constraints.

\noindent\textbf{Privacy and offensive-content checks.} Our experiments do not involve personally identifying information or identity-targeted offensive content. We do not collect new human-subject data. The data used in our experiments consist of task prompts, model responses, instruction-tuning examples, and public benchmark questions. We do not use fields designed to name or uniquely identify individual people, and we do not release any personally identifying information. We also do not use identity-targeted offensive content. The reverse-repair setting intentionally includes unsafe or incorrect task completions, such as insecure code, Mistake math, and Mistake medical examples, because these are the failure modes being repaired. These examples are used only in controlled experiments and reported through aggregate metrics.

\noindent\textbf{Artifact coverage.} We document the domains and evaluation coverage of the artifacts used in our experiments. In the general fine-tuning setting, the candidate training data are broad instruction-tuning data assembled from FLAN V2, chain-of-thought data, Dolly, and OpenAssistant. The evaluation benchmarks cover factual knowledge, challenging reasoning and instruction following, multilingual question answering, and mathematical problem solving. The agent-safety guardrail experiment uses a private safety-related training pool of about 2K examples and evaluates on ATBench. In the repair setting, the training data cover code, math, and medical domains, and are used to evaluate whether filtering can reduce harmful or incorrect behaviors after SFT.

\noindent\textbf{Risks and safeguards.} We report the main risks associated with applying PRISM. Although PRISM itself is a data-selection method, it can influence which behaviors a model is encouraged to learn or suppress. Poorly specified target examples may amplify bias, exclude useful minority patterns, or remove data that is valuable in underrepresented contexts. Malicious actors could also misuse data-selection methods to strengthen harmful target behaviors. Therefore, PRISM scores should be treated as decision support rather than definitive evidence that a training example is harmful or beneficial. We recommend clear documentation of target definitions, artifact licenses, human review of selected or removed data, and downstream safety evaluation before deployment.

\bibliography{iclr2026_conference}
\bibliographystyle{iclr2026_conference}

\appendix
\section{Related Work}

\vspace{+1mm}
\noindent\textbf{Training data attribution and data selection.} Training data attribution (TDA) studies how training data accounts for a model prediction, loss, or behavior. Prior work approaches this problem from several complementary perspectives. Data valuation methods, including leave-one-out and Shapley-style valuation, estimate the contribution of training examples to model utility, but often require many retraining or subset-evaluation runs \citep{ghorbani2019data}. Influence functions approximate the effect of upweighting a training example through local curvature \citep{koh2017understanding}, while representer points and gradient-tracing methods attribute predictions through model parameters or training trajectories \citep{yeh2018representer,NEURIPS2020_e6385d39}. More recent scalable TDA methods, such as datamodels, TRAK, and DataInf, improve applicability to large neural networks and parameter-efficient fine-tuning \citep{pmlr-v162-ilyas22a,park2023trak,kwon2024datainf}. These methods differ in estimator and computational tradeoff, but share a common interface: given a target signal, they assign candidate training examples an attribution score. Data selection turns this attribution interface into an operational budgeted decision, retaining helpful examples or removing harmful ones. In LLM fine-tuning, LESS selects influential data for target instruction-tuning tasks \citep{xia2024less}, IF-GUIDE traces undesirable generations back to training data for detoxification \citep{NEURIPS2025_715dbde2}, and influence function selection has also been studied for reasoning data \citep{humane2025influencefunctionsefficientdata}. Related work further applies data-efficient fine-tuning to recommendation \citep{lin2024dataefficientfinetuningllmbasedrecommendation} and studies balancing strategies for influence-based instruction tuning across diverse capabilities \citep{dai-etal-2025-improving}. PRISM follows this target-conditioned TDA and data-selection paradigm. Rather than proposing another influence estimator, PRISM improves the target signal that the attribution score is aligned with.

\vspace{+1mm}
\noindent\textbf{Scalable and low-budget data selection.} A complementary line of work improves the scalability of data selection itself. Influence-preserving proxies approximate gradient-based influence structure for LLM fine-tuning \citep{chen2026influencepreservingproxiesgradientbaseddata}, while influence distillation learns scalable per-example selection weights from influence signals \citep{nikdan2025efficientdataselectionscale}. DELIFT studies data-efficient instruction fine-tuning without relying on influence functions \citep{agarwal2025delift}. Other settings introduce additional constraints or selection granularity, such as federated instruction tuning \citep{qin-etal-2025-federated} and group-level data selection for pretraining \citep{NEURIPS2025_e389ad5c}. These works mainly address how to make data selection cheaper, broader, or more robust across settings. PRISM is orthogonal: it focuses on how to define the target direction more precisely, and can in principle be combined with more scalable influence estimators or proxies.

\vspace{+1mm}
\noindent\textbf{Target behavior representation.} Before training examples can be scored, the target behavior must be represented as a direction or signal. Existing methods build such representations in different spaces. Concept Influence and Persona Vectors construct activation-space directions, often by averaging representations of examples that express a concept or trait \citep{kowal2026concept,chen2025personaVectors}. Influence-based data selection constructs parameter-space directions, often by averaging gradients from target examples \citep{xia2024less}. IF-GUIDE further uses a contrastive positive-negative construction in gradient space \citep{NEURIPS2025_715dbde2}. Other data-filtering methods incorporate model-internal or external signals, such as model self-perception in CPQS-Tuning \citep{ren2026cpqstuning} and external knowledge in RECOST \citep{zhang-etal-2024-recost}. Although these approaches differ in representation space and information source, they typically do not distinguish target examples by how close they are to the model behavior. PRISM relaxes this assumption by weighting target examples according to the model's preference, producing a preference-aware target direction for training-data scoring.

\section{Benchmark-derived Repair Signal}
\label{app:benchmark-derived-signal}

All data-selection methods are evaluated under the same benchmark-derived repair setting from Section~\ref{sec:reverse-repair}. For each mixed dataset, we first fine-tune the model on the unfiltered SFT data and evaluate both the pre-SFT and post-SFT models on the MASK provided-facts benchmark. We then collect the benchmark queries whose behavior changes undesirably and pair the pre-SFT response with the post-SFT response on the same query. This shared paired-response set is the only benchmark-derived signal provided to the data-selection methods. Each method converts this signal into a score over the original SFT examples, removes the same fixed fraction of high-risk examples within a setting, and retrains from the same starting model on the retained data. Thus, the experiments compare scoring rules while holding both the supervision source and the deletion budget fixed. Section~\ref{sec:filter-ratio} varies the retained data ratio explicitly.

\section{Full Ranking Quality for Reverse Repair}
\label{app:ranking-auroc-full}

\begin{table*}[!t]
\centering
\scriptsize
\setlength{\tabcolsep}{3pt}
\resizebox{\linewidth}{!}{
\begin{tabular}{lrrrrrrrrrrrrrrrrrrrr}
\toprule
Method
& \multicolumn{8}{c}{Llama-3.1-8B-Instruct}
& \multicolumn{4}{c}{Qwen-3-8B}
& \multicolumn{8}{c}{Qwen-3-14B} \\
\cmidrule(lr){2-9}\cmidrule(lr){10-13}\cmidrule(lr){14-21}
Data family
& \multicolumn{4}{c}{Mistake math}
& \multicolumn{4}{c}{Mistake medical}
& \multicolumn{4}{c}{Mistake math}
& \multicolumn{4}{c}{Insecure code}
& \multicolumn{4}{c}{Mistake math} \\
\cmidrule(lr){2-5}\cmidrule(lr){6-9}\cmidrule(lr){10-13}\cmidrule(lr){14-17}\cmidrule(lr){18-21}
Harmful ratio
& 1\% & 10\% & 20\% & 50\%
& 1\% & 10\% & 20\% & 50\%
& 1\% & 10\% & 20\% & 50\%
& 1\% & 10\% & 20\% & 50\%
& 1\% & 10\% & 20\% & 50\% \\
\midrule
\textsc{Random} & 0.46 & 0.50 & 0.51 & 0.50 & 0.50 & 0.48 & 0.51 & 0.51 & 0.46 & 0.50 & 0.51 & 0.50 & 0.50 & 0.49 & 0.50 & 0.50 & 0.46 & 0.50 & 0.51 & 0.50 \\
\textsc{LESS} & 0.22 & 0.28 & 0.29 & 0.31 & 0.29 & 0.36 & 0.44 & 0.47 & 0.20 & 0.25 & 0.29 & 0.30 & 0.34 & 0.37 & 0.39 & 0.44 & 0.12 & 0.16 & 0.24 & 0.35 \\
\textsc{VecFilter} & 0.79 & 0.69 & 0.27 & 0.29 & \textbf{0.69} & 0.13 & 0.54 & 0.54 & 0.56 & 0.56 & 0.48 & 0.66 & 0.33 & 0.32 & 0.35 & 0.51 & 0.66 & 0.63 & 0.67 & 0.67 \\
\textsc{ProjDiff} & 0.61 & 0.57 & 0.35 & 0.35 & 0.58 & 0.32 & 0.40 & 0.46 & 0.51 & 0.51 & 0.51 & 0.52 & 0.42 & 0.44 & 0.45 & 0.50 & 0.53 & 0.53 & 0.56 & 0.54 \\
\textsc{ConceptInf} & 0.73 & 0.65 & 0.44 & 0.45 & 0.63 & 0.36 & 0.50 & 0.53 & 0.41 & 0.40 & 0.38 & 0.37 & \textbf{0.67} & 0.55 & 0.54 & 0.51 & \textbf{0.69} & 0.40 & 0.21 & 0.22 \\
\textsc{IF-Guide} & 0.53 & 0.58 & 0.61 & 0.63 & 0.32 & 0.47 & 0.38 & 0.44 & 0.52 & 0.55 & 0.56 & 0.55 & 0.46 & 0.53 & 0.59 & 0.64 & 0.51 & 0.54 & 0.63 & 0.65 \\
\midrule
\textsc{PRISM} & \textbf{0.79} & \textbf{0.70} & \textbf{0.71} & \textbf{0.69} & 0.63 & \textbf{0.63} & \textbf{0.58} & \textbf{0.59} & \textbf{0.66} & \textbf{0.63} & \textbf{0.70} & \textbf{0.68} & 0.55 & \textbf{0.67} & \textbf{0.66} & \textbf{0.71} & 0.55 & \textbf{0.67} & \textbf{0.69} & \textbf{0.72} \\
\bottomrule
\end{tabular}}
\caption{Full data ranking quality on reverse repair settings. Each reported value is from one run. Columns are grouped by model and data family, with the harmful-data ratio row shown explicitly. Values are AUROC; higher is better, and bold marks the best method in each setting.}
\label{tab:ranking-auroc}
\vspace{-0.3cm}
\end{table*}

\section{Benchmark Degradation from Harmful SFT Data}
\label{app:benchmark-degradation}

Before evaluating data repair, we verify that the data families create the failure mode of interest. Table~\ref{tab:harmful-effect} compares the base model, pure harmful SFT, and pure correct SFT conditions. The pure harmful conditions increase dishonesty rate in several model-family pairs, while pure correct SFT generally lowers it. This confirms that the benchmark is sensitive to the harmful-data families and motivates filtering mixed SFT data before retraining.

The main ranking and filtering tables focus on emergent misalignment settings, i.e., model-family pairs where mixed harmful data produces a clear benchmark degradation relative to the base model. Settings without this degradation are not the primary data-repair target, because there is no observed benchmark failure to repair.

\begin{table}[t]
\centering
\setlength{\tabcolsep}{4pt}
\resizebox{\columnwidth}{!}{
\begin{tabular}{llrrr}
\toprule
Model & Data family & Base & Wrong & Correct \\
\midrule
\multirow{3}{*}{Llama-3.1-8B-Inst.} & Insecure code & \multirow{3}{*}{53.29} & 49.27 & 49.27 \\
& Mistake math & & 68.61 & 47.45 \\
& Mistake medical & & 74.45 & 47.45 \\
\midrule
\multirow{3}{*}{Qwen-3-8B} & Insecure code & \multirow{3}{*}{52.92} & 52.19 & 49.27 \\
& Mistake math & & 64.96 & 45.26 \\
& Mistake medical & & 52.20 & 35.04 \\
\midrule
\multirow{3}{*}{Qwen-3-14B} & Insecure code & \multirow{3}{*}{59.85} & 61.68 & 50.73 \\
& Mistake math & & 70.80 & 55.84 \\
& Mistake medical & & 56.57 & 36.50 \\
\bottomrule
\end{tabular}}
\caption{Dishonesty rate under base, pure harmful, and pure correct SFT conditions on the MASK provided-facts benchmark. Lower is better.}
\label{tab:harmful-effect}
\vspace{-0.3cm}
\end{table}

\section{Full Results for Additional-Component Baselines}
\label{app:repair-specific-baselines}

Tables~\ref{tab:repair-specific-auroc-full} and~\ref{tab:repair-specific-retrain-full} report the complete per-setting values summarized in Table~\ref{tab:repair-specific-baselines}. We include PRISM in these appendix tables to make the comparison with additional-component baselines explicit.

\begin{table*}[!t]
\centering
\scriptsize
\setlength{\tabcolsep}{3pt}
\resizebox{\linewidth}{!}{
\begin{tabular}{lrrrrrrrrrrrrrrrrrrrr}
\toprule
Method
& \multicolumn{8}{c}{Llama-3.1-8B-Instruct}
& \multicolumn{4}{c}{Qwen-3-8B}
& \multicolumn{8}{c}{Qwen-3-14B} \\
\cmidrule(lr){2-9}\cmidrule(lr){10-13}\cmidrule(lr){14-21}
Data family
& \multicolumn{4}{c}{Mistake math}
& \multicolumn{4}{c}{Mistake medical}
& \multicolumn{4}{c}{Mistake math}
& \multicolumn{4}{c}{Insecure code}
& \multicolumn{4}{c}{Mistake math} \\
\cmidrule(lr){2-5}\cmidrule(lr){6-9}\cmidrule(lr){10-13}\cmidrule(lr){14-17}\cmidrule(lr){18-21}
Harmful ratio
& 1\% & 10\% & 20\% & 50\%
& 1\% & 10\% & 20\% & 50\%
& 1\% & 10\% & 20\% & 50\%
& 1\% & 10\% & 20\% & 50\%
& 1\% & 10\% & 20\% & 50\% \\
\midrule
\textsc{SAE-Feat} & 0.41 & 0.39 & 0.41 & 0.41 & 0.37 & 0.36 & 0.39 & 0.38 & 0.47 & 0.47 & 0.48 & 0.49 & 0.50 & 0.52 & 0.52 & 0.51 & 0.39 & 0.41 & 0.40 & 0.41 \\
\textsc{LLM Judge} & 0.55 & 0.51 & 0.52 & 0.52 & \textbf{0.92} & \textbf{0.95} & \textbf{0.95} & \textbf{0.98} & 0.53 & 0.50 & 0.51 & 0.52 & \textbf{0.71} & 0.65 & 0.64 & 0.61 & 0.50 & 0.50 & 0.50 & 0.51 \\
\midrule
\textsc{PRISM} & \textbf{0.79} & \textbf{0.70} & \textbf{0.71} & \textbf{0.69} & 0.63 & 0.63 & 0.58 & 0.59 & \textbf{0.66} & \textbf{0.63} & \textbf{0.70} & \textbf{0.68} & 0.55 & \textbf{0.67} & \textbf{0.66} & \textbf{0.71} & \textbf{0.55} & \textbf{0.67} & \textbf{0.69} & \textbf{0.72} \\
\bottomrule
\end{tabular}}
\caption{Full data ranking quality for additional-component baselines on the emergent misalignment settings. Values are AUROC; higher is better. Bold marks the best method among PRISM and additional-component baselines in each setting.}
\label{tab:repair-specific-auroc-full}
\vspace{-0.3cm}
\end{table*}

\begin{table*}[!t]
\centering
\scriptsize
\setlength{\tabcolsep}{3pt}
\resizebox{\linewidth}{!}{
\begin{tabular}{lrrrrrrrrrrrrrrrrrrrr}
\toprule
Method
& \multicolumn{8}{c}{Llama-3.1-8B-Instruct}
& \multicolumn{4}{c}{Qwen-3-8B}
& \multicolumn{8}{c}{Qwen-3-14B} \\
\cmidrule(lr){2-9}\cmidrule(lr){10-13}\cmidrule(lr){14-21}
Data family
& \multicolumn{4}{c}{Mistake math}
& \multicolumn{4}{c}{Mistake medical}
& \multicolumn{4}{c}{Mistake math}
& \multicolumn{4}{c}{Insecure code}
& \multicolumn{4}{c}{Mistake math} \\
\cmidrule(lr){2-5}\cmidrule(lr){6-9}\cmidrule(lr){10-13}\cmidrule(lr){14-17}\cmidrule(lr){18-21}
Harmful ratio
& 1\% & 10\% & 20\% & 50\%
& 1\% & 10\% & 20\% & 50\%
& 1\% & 10\% & 20\% & 50\%
& 1\% & 10\% & 20\% & 50\%
& 1\% & 10\% & 20\% & 50\% \\
\midrule
\textsc{SAE-Feat} & 45.99 & 48.18 & 51.83 & 61.31 & 47.81 & 51.10 & 52.56 & 64.60 & 45.62 & 52.19 & 52.19 & 62.77 & 51.83 & 62.41 & 66.06 & 64.23 & 56.20 & 66.06 & 65.69 & 66.79 \\
\textsc{LLM Judge} & 48.91 & 49.27 & 53.65 & \textbf{58.39} & 47.45 & \textbf{47.45} & \textbf{49.27} & 66.06 & 51.83 & 53.65 & 57.66 & 63.14 & 55.11 & 58.03 & 64.23 & 62.77 & 55.47 & 62.41 & 66.79 & 68.25 \\
\midrule
\textsc{PRISM} & \textbf{44.53} & \textbf{46.35} & \textbf{47.81} & 58.76 & \textbf{45.26} & \textbf{47.45} & \textbf{49.27} & \textbf{55.84} & \textbf{44.16} & \textbf{47.08} & \textbf{51.10} & \textbf{58.39} & \textbf{48.91} & \textbf{57.66} & \textbf{58.76} & \textbf{60.22} & \textbf{53.29} & \textbf{58.76} & \textbf{62.77} & \textbf{65.33} \\
\bottomrule
\end{tabular}}
\caption{Full post-filtering retraining results for additional-component baselines on the emergent misalignment settings. Values are dishonesty rates; lower is better. Bold marks the best method among PRISM and additional-component baselines in each setting.}
\label{tab:repair-specific-retrain-full}
\vspace{-0.3cm}
\end{table*}

\section{Reproducibility Details}
\label{app:reproducibility}

\noindent\textbf{Curvature approximation.} We use Hessian approximations for the experimental settings. We approximate the inverse-Hessian-vector product \(H^{-1}v\) with Eigenvalue-Corrected Kronecker-Factored Approximate Curvature (EK-FAC) \citep{NEURIPS2018_48000647}. EK-FAC represents the local curvature in a Kronecker-factored eigenbasis and corrects the eigenvalues in that basis, providing a scalable approximation to the curvature preconditioner used by the repair influence score. Within each experimental setting, all baselines use the same curvature approximation, so comparisons differ only in the scoring rule rather than in the Hessian approximation.

\noindent\textbf{Target and test splits.} For MMLU, BBH, and TyDi QA in the general-capability experiments, we follow the target-set and test-set splits released by LESS \citep{xia2024less}. For MATH-500, we randomly split the examples into target and test sets with a \(1{:}4\) ratio. The target split is used to define the target behavior for data selection, and the held-out split is used only for evaluation.

\noindent\textbf{Agent-safety guardrail setting.} For the agent-safety guardrail experiment in Section~\ref{sec:agent-guardrail}, the candidate training pool contains about 2K private safety-related examples. We use ATBench \citep{li2026atbench}, a diverse agent-trajectory benchmark for safety evaluation and diagnosis, as the evaluation set. We scan selection rates from \(20\%\) to \(80\%\), compare PRISM with random selection under the same rate, and also report the origin model and full-data SFT results.

\noindent\textbf{SFT hyperparameters.} We use LLaMA-Factory \citep{zheng-etal-2024-llamafactory} for supervised fine-tuning. Table~\ref{tab:repro-hparams} reports the specific parameters used in the general-capability and reverse-repair SFT runs.

\begin{table}[h!]
\centering
\setlength{\tabcolsep}{4pt}
\resizebox{\columnwidth}{!}{
\begin{tabular}{lll}
\toprule
Parameter & General selection & Reverse repair \\
\midrule
GPUs & \(1\times\) H200 & \(1\times\) H200 \\
Learning rate & \(2.0\times10^{-5}\) & \(1.0\times10^{-4}\) \\
Epochs & 4 & 1 \\
Batch size & 1 & 4 \\
Gradient accumulation & 128 & 1 \\
Max sequence length & 2048 & 2048 \\
Weight decay & 0.0 & 0 \\
LoRA rank & 128 & 32 \\
LoRA \(\alpha\) & 512 & 64 \\
LoRA modules & q, k, v, o projections & All \\
\bottomrule
\end{tabular}}
\caption{Fine-tuning hyperparameters used in the LLaMA-Factory SFT runs. For general selection, batch size is the per-device train batch size; with one GPU, gradient accumulation of 128 gives an effective global batch size of 128. For reverse repair, LoRA \(\alpha\) follows the LLaMA-Factory default \(2r\) when not explicitly specified in the configuration.}
\label{tab:repro-hparams}
\vspace{-0.3cm}
\end{table}

\section{Proofs for Section~\ref{sec:theory}}
\label{app:theory-proofs}

\subsection{Proof of Theorem~\ref{thm:influence-gain}}
\label{app:proof-influence-gain}

\begin{proof}
For a single training example, define the infinitesimally upweighted training objective as
{\small
\begin{equation}
\mathcal L_{\epsilon,i}(\theta)
=
 \mathcal L(\theta)+\epsilon\ell(x_i,y_i;\theta).
\label{eq:app-upweighted-objective}
\end{equation}
}
The local optimum \(\theta_{\epsilon,i}\) satisfies
\(\nabla_\theta\mathcal L_{\epsilon,i}(\theta_{\epsilon,i})=0\).
Differentiating this stationarity condition with respect to \(\epsilon\) at \(\epsilon=0\) gives
{\small
\begin{equation}
H
\frac{d\theta_{\epsilon,i}}{d\epsilon}\Big|_{\epsilon=0}
+
 g_{z_i}(\theta)
=0,
\label{eq:app-upweighted-stationarity}
\end{equation}
}
and therefore
\(d\theta_{\epsilon,i}/d\epsilon|_{\epsilon=0}=-H^{-1}g_{z_i}(\theta)\).
Using \(\nabla_\theta\mathcal K(\theta)=-g_{\mathrm{KL}}(\theta)\), the chain rule yields
{\small
\begin{equation}
\begin{aligned}
\frac{d}{d\epsilon}
\mathcal K(\theta_{\epsilon,i})\Big|_{\epsilon=0}
&=
\nabla_\theta\mathcal K(\theta)^\top
\left(-H^{-1}g_{z_i}(\theta)\right)\\
&=
g_{z_i}(\theta)^\top H^{-1}g_{\mathrm{KL}}(\theta).
\end{aligned}
\label{eq:app-single-influence-gain}
\end{equation}
}
For a subset \(S\) of size \(m\), the same argument replaces the single-example gradient with
\(\bar g_S=\frac{1}{m}\sum_{z_i\in S}g_{z_i}(\theta)\), which gives
{\small
\begin{equation}
\frac{d}{d\epsilon}\mathcal K(\theta_{S,\epsilon})\Big|_{\epsilon=0}
=
\frac{1}{m}\sum_{z_i\in S}h_\pi(z_i;\theta).
\label{eq:app-subset-influence-gain}
\end{equation}
}
Thus, under a fixed size-\(m\) budget, maximizing the first-order reward gain is equivalent to maximizing the sum of \(h_\pi\)-scores, which is achieved by selecting the top-\(m\) examples.
\end{proof}

\subsection{Closed form and upper bounds for paired KL}
\label{app:proof-kl-basic}

For this appendix, define \(P_q^\theta\) as the normalized distribution over the target-positive and target-negative responses:
{\small
\begin{equation}
P_q^\theta
=
\bigl(\pi_q(\theta),1-\pi_q(\theta)\bigr),
\end{equation}
}
with target-negative reference \(A_q=(0,1)\). We also write
{\small
\begin{equation}
\begin{aligned}
P(\theta)
&=
\frac{1}{|\mathcal Q|}
\sum_{q\in\mathcal Q}\pi_q(\theta),\\
R_{\mathrm{pair}}(\theta)
&=
\frac{1}{|\mathcal Q|}
\sum_{q\in\mathcal Q}
\mathbf 1[\pi_q(\theta)\ge 1/2].
\end{aligned}
\end{equation}
}

\begin{proposition}[Closed form and upper bounds]
\label{prop:kl-basic}
For each \(q\in\mathcal{Q}\),
{\small
\begin{equation}
D_{\mathrm{KL}}(A_q\|P_q^\theta)
=
-\log\!\bigl(1-\pi_q(\theta)\bigr).
\label{eq:theory-kl-pi}
\end{equation}
}
Equivalently, if
{\small
\begin{equation}
m_q(\theta)
=
\log \bar p_\theta(y_q^+\mid q)-\log \bar p_\theta(y_q^-\mid q),
\label{eq:theory-margin}
\end{equation}
}
then
{\small
\begin{equation}
\begin{aligned}
\pi_q(\theta)&=\sigma(m_q(\theta)),\\
D_{\mathrm{KL}}(A_q\|P_q^\theta)
&=\log(1+e^{m_q(\theta)}).
\end{aligned}
\label{eq:theory-kl-margin}
\end{equation}
}
Moreover,
{\small
\begin{equation}
P(\theta)\le \mathcal{K}(\theta),
\label{eq:theory-P-K}
\end{equation}
}
and
{\small
\begin{equation}
R_{\mathrm{pair}}(\theta)\le 2P(\theta)\le 2\mathcal{K}(\theta).
\label{eq:theory-R-bound}
\end{equation}
}
\end{proposition}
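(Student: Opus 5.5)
The plan is to prove each claim pointwise for a fixed query \(q\in\mathcal Q\) and then average over \(\mathcal Q\). Every quantity involved (\(\mathcal K\), \(P\), \(R_{\mathrm{pair}}\)) is a uniform average of per-query terms, and averaging preserves inequalities. Throughout, I will use that the model's token probabilities come from a softmax and are therefore strictly positive. Hence \(\bar p_\theta(y_q^\pm\mid q)>0\), so \(\pi_q(\theta)\in(0,1)\) and every logarithm below is finite. This positivity is the only point needing care: it rules out the degenerate case \(\pi_q=1\), where the KL would be infinite.

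\textbf{Closed form.} I will expand the definition \(D_{\mathrm{KL}}(A_q\|P_q^\theta)=\sum_k A_q(k)\log\bigl(A_q(k)/P_q^\theta(k)\bigr)\) with \(A_q=(0,1)\). I adopt the standard convention \(0\log 0=0\), so the first coordinate contributes nothing. The second coordinate gives \(1\cdot\log\bigl(1/(1-\pi_q)\bigr)=-\log(1-\pi_q)\), which is the first displayed identity.

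\textbf{Margin form.} I will write \(a=\bar p_\theta(y_q^+\mid q)\) and \(b=\bar p_\theta(y_q^-\mid q)\), so that \(m_q=\log a-\log b\). Dividing numerator and denominator of \(\pi_q=a/(a+b)\) by \(a\) gives \(\pi_q=1/(1+b/a)=1/(1+e^{-m_q})=\sigma(m_q)\). It follows that \(1-\pi_q=\sigma(-m_q)=1/(1+e^{m_q})\). Substituting into the closed form yields \(D_{\mathrm{KL}}(A_q\|P_q^\theta)=\log(1+e^{m_q})\).

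\textbf{Upper bounds.} For \(P(\theta)\le\mathcal K(\theta)\), I will use the elementary inequality \(-\log(1-x)\ge x\) for \(x\in[0,1)\). This follows from \(\log(1-x)\le -x\), which is concavity of \(\log\), i.e.\ its tangent line at \(1\). Applying it with \(x=\pi_q\) gives \(\pi_q\le -\log(1-\pi_q)\) for every \(q\), and averaging over \(\mathcal Q\) gives the claim. For \(R_{\mathrm{pair}}\le 2P\), I will check the pointwise bound \(\mathbf 1[\pi_q\ge 1/2]\le 2\pi_q\). If \(\pi_q\ge 1/2\), the left side is \(1\le 2\pi_q\). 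Otherwise the left side is \(0\le 2\pi_q\). Averaging over \(\mathcal Q\) gives \(R_{\mathrm{pair}}\le 2P\), and combining with the previous bound gives \(2P\le 2\mathcal K\).
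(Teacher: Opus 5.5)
Your proposal is correct and follows the paper's proof essentially step for step. It obtains the closed form directly from \(A_q=(0,1)\), rewrites \(\pi_q\) as \(\sigma(m_q)\) via the margin, and uses the pointwise inequalities \(x\le-\log(1-x)\) and \(\mathbf 1[x\ge 1/2]\le 2x\) averaged over \(\mathcal Q\). Your extra remarks (strict positivity so that \(\pi_q\in(0,1)\), the \(0\log 0=0\) convention, and the tangent-line justification of \(\log(1-x)\le -x\)) just make explicit details the paper leaves implicit.
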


\begin{proof}
Since \(A_q=(0,1)\),
{\small
\begin{equation}
D_{\mathrm{KL}}(A_q\|P_q^\theta)
=
\log \frac{1}{1-\pi_q(\theta)}
=
-\log(1-\pi_q(\theta)),
\end{equation}
}
which proves Eq.~\eqref{eq:theory-kl-pi}. By definition of the margin,
{\small
\begin{equation}
\pi_q(\theta)
=
\frac{1}{1+\exp(-m_q(\theta))}
=
\sigma(m_q(\theta)),
\end{equation}
}
and substituting this into the previous display gives
Eq.~\eqref{eq:theory-kl-margin}.

For \(x\in[0,1)\), the elementary inequality
\(x\le -\log(1-x)\) gives
\(\pi_q(\theta)\le D_{\mathrm{KL}}(A_q\|P_q^\theta)\).
Averaging over \(q\) proves Eq.~\eqref{eq:theory-P-K}. Also,
\(\mathbf 1[x\ge 1/2]\le 2x\) for \(x\in[0,1]\). Applying this to
\(\pi_q(\theta)\) and averaging yields
{\small
\begin{equation}
R_{\mathrm{pair}}(\theta)
\le 2P(\theta)
\le 2\mathcal K(\theta),
\end{equation}
}
which proves Eq.~\eqref{eq:theory-R-bound}.
\end{proof}

\subsection{Proof of the paired-gradient identity}
\label{app:proof-grad-kl}

\begin{proof}
By Proposition~\ref{prop:kl-basic},
{\small
\begin{equation}
\mathcal K(\theta)
=
\frac{1}{|\mathcal Q|}
\sum_{q\in\mathcal Q}
\log(1+e^{m_q(\theta)}).
\end{equation}
}
For one summand,
{\small
\begin{equation}
\nabla_\theta\log(1+e^{m_q})
=
\sigma(m_q)\nabla_\theta m_q.
\end{equation}
}
Moreover,
{\small
\begin{equation}
\begin{aligned}
\nabla_\theta m_q
&=
\nabla_\theta\log \bar p_\theta(y_q^+\mid q)
-
\nabla_\theta\log \bar p_\theta(y_q^-\mid q)\\
&=
\nabla_\theta\bar\ell(q,y_q^-;\theta)
-
\nabla_\theta\bar\ell(q,y_q^+;\theta).
\end{aligned}
\end{equation}
}
Since \(\sigma(m_q)=\pi_q\), each summand contributes
\(-\pi_q(\theta)(\nabla_\theta\bar\ell(q,y_q^+;\theta)-\nabla_\theta\bar\ell(q,y_q^-;\theta))\). Averaging over
\(\mathcal Q\) gives
\(\nabla_\theta\mathcal K(\theta)=-g_{\mathrm{KL}}(\theta)\) by Eq.~\eqref{eq:method-gkl}.
\end{proof}

\subsection{Proof of the optimal local direction}
\label{app:proof-optimal-direction}

\begin{proof}
By Eq.~\eqref{eq:method-gkl}, the first-order alignment with the
preference-aware target direction is
{\small
\begin{equation}
-\nabla_\theta\mathcal K(\theta)^\top\delta\theta
=
g_{\mathrm{KL}}(\theta)^\top\delta\theta .
\end{equation}
}
Thus the constrained problem is to maximize \(g_{\mathrm{KL}}(\theta)^\top\delta\theta\)
subject to \(\delta\theta^\top H\delta\theta\le c\). By
Cauchy--Schwarz in the \(H^{-1}\) geometry,
{\small
\begin{equation}
\begin{aligned}
(g_{\mathrm{KL}}(\theta)^\top\delta\theta)^2
&=
\bigl((H^{-1/2}g_{\mathrm{KL}}(\theta))^\top
(H^{1/2}\delta\theta)\bigr)^2\\
&\le
\left(g_{\mathrm{KL}}(\theta)^\top H^{-1}g_{\mathrm{KL}}(\theta)\right)\\
&\qquad\cdot
\left(\delta\theta^\top H\delta\theta\right)\\
&\le
c\,g_{\mathrm{KL}}(\theta)^\top H^{-1}g_{\mathrm{KL}}(\theta).
\end{aligned}
\end{equation}
}
Equality holds iff the Cauchy--Schwarz vectors are proportional, equivalently
{\small
\begin{equation}
\delta\theta\propto H^{-1}g_{\mathrm{KL}}(\theta).
\end{equation}
}
Normalizing this direction to saturate the budget gives
{\small
\begin{equation}
\delta\theta^\star
=
\sqrt{
\frac{c}{
g_{\mathrm{KL}}(\theta)^\top H^{-1} g_{\mathrm{KL}}(\theta)
}}
\,H^{-1}g_{\mathrm{KL}}(\theta).
\label{eq:theory-optimal-direction}
\end{equation}
}
The resulting first-order gain is
{\small
\begin{equation}
-\nabla_\theta\mathcal K(\theta)^\top\delta\theta^\star
=
\sqrt{c}\,\|g_{\mathrm{KL}}\|_{H^{-1}}.
\label{eq:theory-prism-gain}
\end{equation}
}
\end{proof}

\subsection{Proof of the directional gap}
\label{app:proof-cosine-gap}

\begin{proof}
Let \(\langle u,v\rangle_{H^{-1}}=u^\top H^{-1}v\) and \(\|u\|_{H^{-1}}=\sqrt{\langle u,u\rangle_{H^{-1}}}\). To use the same local budget as PRISM, the equal-aggregation update is
{\small
\begin{equation}
\delta\theta_{\mathrm{eq}}
=
\frac{\sqrt{c}}{\|g_0\|_{H^{-1}}}\,
H^{-1}g_0.
\label{eq:theory-delta0}
\end{equation}
}
Using Eq.~\eqref{eq:method-gkl} and Eq.~\eqref{eq:theory-delta0},
{\small
\begin{equation}
\begin{aligned}
-\nabla_\theta\mathcal K(\theta)^\top\delta\theta_{\mathrm{eq}}
&=
g_{\mathrm{KL}}(\theta)^\top\delta\theta_{\mathrm{eq}}
\\
&=
\sqrt c\,
\frac{\langle g_{\mathrm{KL}},g_0\rangle_{H^{-1}}}
{\|g_0\|_{H^{-1}}}\\
&\le
\sqrt c\,\|g_{\mathrm{KL}}\|_{H^{-1}}
\\
&=
-\nabla_\theta\mathcal K(\theta)^\top\delta\theta^\star.
\end{aligned}
\label{eq:theory-cosine-gap}
\end{equation}
}
The optimal value from Eq.~\eqref{eq:theory-optimal-direction} is \(\sqrt c\,\|g_{\mathrm{KL}}\|_{H^{-1}}\). By Cauchy--Schwarz in the \(H^{-1}\) metric,
{\small
\begin{equation}
\langle g_{\mathrm{KL}},g_0\rangle_{H^{-1}}
\le
\|g_{\mathrm{KL}}\|_{H^{-1}}\|g_0\|_{H^{-1}} .
\end{equation}
}
Equivalently, the same argument can be written as the fraction of PRISM's
optimal first-order reward gain retained by equal aggregation:
{\small
\begin{equation}
\begin{aligned}
\frac{
-\nabla_\theta\mathcal K(\theta)^\top\delta\theta_{\mathrm{eq}}
}{
-\nabla_\theta\mathcal K(\theta)^\top\delta\theta^\star
}
&=
\cos_{H^{-1}}\!\left(g_{\mathrm{KL}}(\theta),g_0(\theta)\right)
\\
&=
\frac{\langle g_{\mathrm{KL}},g_0\rangle_{H^{-1}}}
{\|g_{\mathrm{KL}}\|_{H^{-1}}\|g_0\|_{H^{-1}}}
\\
&\le 1.
\end{aligned}
\label{eq:theory-cosine-ratio}
\end{equation}
}
This proves the inequality in Eq.~\eqref{eq:theory-direction-gap-main}. Equality holds
exactly when \(g_0(\theta)\) and \(g_{\mathrm{KL}}(\theta)\) lie on the same positive ray under
the \(H^{-1}\) inner product.
\end{proof}

\subsection{Proof of the equal-budget ranking comparison}
\label{app:proof-equal-budget}

\begin{proof}
Applying Eq.~\eqref{eq:theory-remove-derivative} to \(S=S_\pi^{(m)}\) and \(S=S_0^{(m)}\), then subtracting the two first-order expansions, gives
Eq.~\eqref{eq:theory-gap-main}.

The coefficient in Eq.~\eqref{eq:theory-gap-main} is
{\small
\begin{equation}
\begin{aligned}
\Delta_m
&=
\frac{1}{m}
\left(
\sum_{z_i\in S_\pi^{(m)}}h_\pi(z_i;\theta)
-
\sum_{z_i\in S_0^{(m)}}h_\pi(z_i;\theta)
\right).
\end{aligned}
\label{eq:theory-gap-delta}
\end{equation}
}
It remains only to show \(\Delta_m\ge0\). By construction,
\(S_\pi^{(m)}\) maximizes
\(\sum_{z\in S}h_\pi(z;\theta)\) among all size-\(m\) subsets of
\(\mathcal D\). Since \(S_0^{(m)}\) is one such subset,
{\small
\begin{equation}
\sum_{z\in S_\pi^{(m)}}h_\pi(z;\theta)
\ge
\sum_{z\in S_0^{(m)}}h_\pi(z;\theta),
\end{equation}
}
which proves Eq.~\eqref{eq:theory-gap-delta}. If the inequality is strict,
then \(\Delta_m>0\), and Eq.~\eqref{eq:theory-gap-main} gives a strictly
larger first-order paired KL effect for the preference-weighted subset.
\end{proof}

\section{An Information-Theoretic View of Preference Weighting}
\label{app:info-view}

This appendix gives a complementary interpretation of the preference-weighted
target signal. It is independent of the main exposition: the goal is only to
show why a pair with stronger target-positive preference carries more
information about the target behavior than a low-preference pair.

For a paired target \(q\), define the paired log-odds
{\small
\begin{equation}
r_q(\theta)
=
\log \bar p_\theta(y_q^+\mid q)
-
\log \bar p_\theta(y_q^-\mid q).
\end{equation}
}
Then the paired target-positive preference is
{\small
\begin{equation}
\pi_q(\theta)
=
\frac{\bar p_\theta(y_q^+\mid q)}
{\bar p_\theta(y_q^+\mid q)+\bar p_\theta(y_q^-\mid q)}
=
\sigma(r_q(\theta)).
\end{equation}
}
The target-negative reference distribution assigns all mass to the
target-negative response. The information required to rule out this
target-negative reference under the model's paired distribution is the paired
KL risk
{\small
\begin{equation}
\begin{aligned}
I_q(\theta)
&=
D_{\mathrm{KL}}(A_q\|P_q^\theta)\\
&=
-\log(1-\pi_q(\theta))\\
&=
\log(1+\exp r_q(\theta)).
\end{aligned}
\end{equation}
}
Thus \(I_q\) is the self-information of the target-negative response under the
model's paired distribution. Its marginal sensitivity to the target-positive
paired log-odds is
{\small
\begin{equation}
\frac{\partial I_q(\theta)}{\partial r_q(\theta)}
=
\pi_q(\theta).
\end{equation}
}
This identity gives an information-theoretic meaning to the weight:
increasing the target-positive log-odds by the same small amount changes the
paired KL information more for high-preference pairs than for low-preference pairs.

Using
{\small
\begin{equation}
\begin{aligned}
\nabla_\theta r_q(\theta)
&=
\nabla_\theta \log \bar p_\theta(y_q^+\mid q)
-
\nabla_\theta \log \bar p_\theta(y_q^-\mid q)\\
&=
-\nabla_\theta\bar\ell(q,y_q^+;\theta)
+
\nabla_\theta\bar\ell(q,y_q^-;\theta),
\end{aligned}
\end{equation}
}
we obtain
{\small
\begin{equation}
\begin{aligned}
-\nabla_\theta I_q(\theta)
&=
\pi_q(\theta)\nabla_\theta\bar\ell(q,y_q^+;\theta)\\
&\quad
-
\pi_q(\theta)\nabla_\theta\bar\ell(q,y_q^-;\theta).
\end{aligned}
\end{equation}
}
Averaging over target pairs recovers the preference-weighted target direction in
Eq.~\eqref{eq:method-gkl}. Hence the weighted direction is the negative
gradient of the paired information risk.

The same view also compares the preference-weighted and equal-weight characterizations of
the target behavior. Let
{\small
\begin{equation}
d_q(\theta)
=
\nabla_\theta\bar\ell(q,y_q^+;\theta)
-
\nabla_\theta\bar\ell(q,y_q^-;\theta),
\end{equation}
}
and define
{\small
\begin{equation}
\begin{aligned}
g_{\mathrm{KL}}(\theta)
&=
\frac{1}{|\mathcal Q|}
\sum_{q\in\mathcal Q}\pi_q(\theta)d_q(\theta),\\
\quad
g_0(\theta)
&=
\frac{1}{|\mathcal Q|}
\sum_{q\in\mathcal Q}d_q(\theta).
\end{aligned}
\end{equation}
}
For compactness in the following local-geometry comparison, write
\(a_\theta=g_{\mathrm{KL}}(\theta)\) and \(b_\theta=g_0(\theta)\).
For any local update \(\delta\theta\), its first-order alignment with the
negative gradient of the average paired information risk is
{\small
\begin{equation}
\Delta I(\delta\theta)
=
a_\theta^\top\delta\theta+o(\|\delta\theta\|).
\end{equation}
}
Let \(\langle u,v\rangle_{H^{-1}}=u^\top H^{-1}v\) and
\(\|u\|_{H^{-1}}=\sqrt{\langle u,u\rangle_{H^{-1}}}\). Under the same local budget
\(\delta\theta^\top H\delta\theta\le c\), the maximum first-order
alignment is
{\small
\begin{equation}
\max_{\delta\theta^\top H\delta\theta\le c}
a_\theta^\top\delta\theta
=
\sqrt c\,\|a_\theta\|_{H^{-1}},
\end{equation}
}
achieved by \(\delta\theta\propto H^{-1}a_\theta\). If instead one uses
the equally weighted target direction \(g_0\) and normalizes it to the same
budget,
{\small
\begin{equation}
\delta\theta_{\mathrm{eq}}
=
\frac{\sqrt c}{\|b_\theta\|_{H^{-1}}}\,
H^{-1}b_\theta,
\end{equation}
}
then its first-order alignment is
{\small
\begin{equation}
a_\theta^\top\delta\theta_{\mathrm{eq}}
=
\sqrt c\,
\frac{\langle a_\theta,b_\theta\rangle_{H^{-1}}}
{\|b_\theta\|_{H^{-1}}}
\le
\sqrt c\,\|a_\theta\|_{H^{-1}}.
\end{equation}
}
Equivalently, the ratio between the alignment of the equal-weight direction and
the optimal weighted direction is
{\small
\begin{equation}
\frac{a_\theta^\top\delta\theta_{\mathrm{eq}}}
{\sqrt c\,\|a_\theta\|_{H^{-1}}}
=
\cos_{H^{-1}}(a_\theta,b_\theta)
\le 1 .
\end{equation}
}
Equality holds only when \(g_0(\theta)\) and \(g_{\mathrm{KL}}(\theta)\) are aligned in the
\(H^{-1}\) metric. Therefore, unless all target pairs have effectively
the same preference weight or their gradient contrasts point in the same
direction, equal aggregation gives a less faithful first-order
characterization of the target behavior than the preference-weighted
information-risk direction.

\end{document}